\newcommand{\R}{\mathbb{R}}
\newtheorem{theorem}{Theorem}[section]
\newtheorem{proposition}[theorem]{Proposition}
\newtheorem{lemma}[theorem]{Lemma}
\newtheorem{remark}{Remark}
\newtheorem{conjecture*}{Conjecture}
\theoremstyle{plain}
\title{
RotDCF: Decomposition of Convolutional Filters for Rotation-Equivariant Deep Networks
}
\author[1]{Xiuyuan Cheng\footnote{Email: xiuyuan.cheng@duke.edu.}}
\author[2]{Qiang Qiu}
\author[1,2]{Robert Calderbank}
\author[2]{Guillermo Sapiro}
\affil[1]{Department of Mathematics, Duke University}
\affil[2]{Department of Electrical and Computer Engineering, Duke University}
\date{}
\begin{document}

\maketitle

\begin{abstract}

Explicit encoding of group actions in deep features makes it possible for convolutional neural networks (CNNs) to handle global deformations of images, 
which is critical to success in many vision tasks. 
This paper proposes to decompose the convolutional filters over joint steerable bases across the space and the group geometry simultaneously,
namely a rotation-equivariant CNN with decomposed convolutional filters (RotDCF).
This decomposition facilitates computing the joint convolution,
which is proved to be necessary for the group equivariance.
It significantly reduces the model size and computational complexity while preserving performance,
and truncation of the bases expansion serves implicitly to regularize the filters. 
On datasets involving in-plane and out-of-plane object rotations,
RotDCF deep features demonstrate greater robustness and interpretability than regular CNNs.
The stability of the equivariant representation to input variations is also proved theoretically 
under generic assumptions on the filters in the decomposed form.
The RotDCF framework can be extended to groups other than rotations, 
providing a general approach which achieves both group equivariance and representation stability at a reduced model size.
\end{abstract}

\section{Introduction}

While deep convolutional neural networks (CNN) have been widely used in computer vision and image processing applications, 
they are not designed to handle large group actions like rotations,
which degrade the performance of CNN in many tasks 
\cite{cheng2016rifd,hallman2015oriented,jaderberg2015spatial,laptev2016ti,maninis2016convolutional}. 
The regular convolutional layer is equivariant to input translations, 
but not other group actions. 
An indirect way to encode group information into the deep representation
 is to conduct generalized convolutions across the group as well, as in \cite{Cohen2016}.
In theory, this approach can guarantee the 
group equivariance of the learned representations,
 which provides better interpretability and regularity 
as well as the capability of estimating the group action in localization, boundary detection, etc.
 
 For the important case of 2D rotations,
group-equivariant CNNs have been constructed in several recent works,
e.g., \cite{weiler2017learning}, Harmonic Net \cite{worrall2017harmonic} and
Oriented Response Net \cite{zhou2017oriented}.
In such networks, 
the layer-wise output has an extra index representing the group element (c.f. Section \ref{sec:2-1}, Table \ref{tab:snn}),
and consequently,
the convolution must be across the space and the group jointly (proved in Section \ref{sec:3-1}). 
This typically incurs a significant increase in the number of parameters and computational load,
even with the adoption of steerable filters \cite{freeman1991design,weiler2017learning,worrall2017harmonic}. 
In parallel, low-rank factorized filters have been proposed 
for sparse coding as well as the compression and regularization of deep networks.
In particular, 
\cite{qiu2018dcfnet} showed that 
decomposing filters under non-adaptive bases can be an effective way to reduce the model size of CNNs without sacrificing performance.
However, these approaches do not directly apply to be group-equivariant.
We review these connections in more detail in Section \ref{sec:1-1}.

This paper proposes a truncated bases decomposition of the filters in group-equivariant CNNs,
which we call the rotation-equivariant CNN with decomposed convolutional filters (RotDCF).
Since we need a joint convolution over $\R^2$ and $SO(2)$,
the bases are also joint across the two geometrical domains, c.f. Figure \ref{fig:rotdcf1}.
The benefits of bases decomposition are three-fold:
\begin{itemize}
\vspace{- .05 in}
\item[(1)] 
Reduction of the number of parameters and computational complexity of rotation-equivariant CNNs, c.f. Section \ref{sec:2-3};
\vspace{- .1 in}
\item[(2)] 
Implicit regularization of the convolutional filters,
leading to improved robustness of the learned deep representation  shown experimentally in Section \ref{sec:4};
\vspace{- .1 in}
\item[(3)]  
Theoretical guarantees on stability of the equivariant representation to input deformations,
that follow from a more generic condition on the filters in the decomposed form, 
c.f. Section \ref{sec:3-2} and the Appendix.
\vspace{- .05 in}
\end{itemize}
We explain this in more detail in the rest of the paper.

\begin{figure}[t]
  \centering
\includegraphics[width=.99\linewidth]{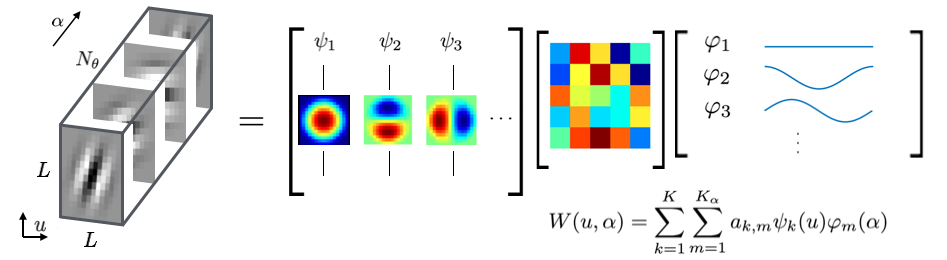} 
\caption{
\small 
Decomposition of the convolutional filter
across the 2D space (variable $u$) and the $SO(2)$ rotation group geometry (variable $\alpha$) simultaneously.
The filter is represented as a truncated expansion under the prefixed bases $\psi_k(u)\varphi_m(\alpha)$
with adaptive coefficients $a_{k,m}$ learned from data, where $\psi_k$ are Fourier-Bessel bases and $\varphi_m$ are Fourier bases.
The filter has $N_\theta$ group-indexed channels (indexed by $\alpha$)
and only one input and output unstructured channel (indexed by $\lambda'$ and $\lambda$ respectively in text) 
for simplicity, c.f. Section \ref{sec:2-1} and Table \ref{tab:snn}.
}
\vskip -0.1 in
\label{fig:rotdcf1}
\end{figure}

\subsection{Related Work}\label{sec:1-1}

{\bf Learning with factorized filters.}
In the sparse coding literature,  
\cite{rubinstein2010double} proposed the factorization of learned dictionaries 
under another prefixed dictionary. 
Separable filters were used in \cite{rigamonti2013learning} to learn the coding of images. 
 \cite{papyan2017convolutional} interpreted CNN as an iterated convolutional sparse coding machine,
and in this view, the factorized filters should correspond to a ``dictionary of the dictionary'' 
as in \cite{rubinstein2010double}. 
In the deep learning literature, 
low-rank factorization of convolutional filters has been previously used to remove redundancy in trained CNNs \cite{DentonZBLF14, jaderberg2014speeding}.
The compression of deep networks has also been studied in 
\cite{chen2015compressing, han2015deep_compression, han2015learning}, 
SqueezeNet \cite{SqueezeNet}, etc., where the low-rank factorization of filters can be utilized.
MobileNets \cite{howard2017mobilenets} used 
depth-wise separable convolutions to obtain significant compression.
Tensor decomposition of convolutional layers was used in \cite{lebedev2014speeding} for CPU speedup.
\cite{tai2015convolutional}  proposed low-rank-regularized filters and
obtained improved classification accuracy with reduced computation.
\cite{qiu2018dcfnet} studied decomposed-filter CNN with
prefixed bases and trainable expansion coefficients, 
showing that the truncated Fourier-Bessel bases decomposition incurs 
almost no decrease in classification accuracy 
while significantly reducing the model size and improving the robustness of the deep features.
None of the above networks are group equivariant. 

{\bf Group-equivariant deep networks.}
The encoding of group information into network representations has been studied extensively.
Among earlier works, transforming auto-encoders \cite{hinton2011transforming} 
used a non-convolutional network to learn group-invariant features
and compared with hand-crafted ones.
Rotation-invariant descriptors were studied in \cite{schmidt2012learning} with product models,
and in \cite{Jaderberg2015,Kivinen2011, Schmidt2012} by estimating the specific image transformation.  
\cite{Marcos2016,Wu2015} proposed rotating conventional filters to perform rotation-invariant texture and image classification.
The joint convolution across space and rotation has been studied in the scattering transform \cite{oyallon2015deep,sifre2013rotation}.
Group-equivariant CNN was considered by \cite{Cohen2016},
which handled several finite small-order discrete groups on the input image.
Rotation-equivariant CNN was later developed in \cite{weiler2017learning,worrall2017harmonic,zhou2017oriented} and elsewhere. 
In particular, steerable filters were used in \cite{cohen2016steerable, weiler2017learning,worrall2017harmonic}. 
$SO(3)$-equivariant CNN for signals on spheres was studied in \cite{cohen2018spherical} in a different setting.
Overall, the efficiency of equivariant CNNs remains to be improved since the model is typically several times larger than that of a regular CNN.

\section{Rotation-equivariant DCF Net}\label{sec:2}

\begin{table}[b]
\begin{centering}
\scriptsize
\begin{tabular}[t]{ c | c  | c}
\hline
    fully-connected layer 	                               &  regular convolutional layer                                  &  CNN with group-indexed channels \\
 \hline
                                                                           &                                                                           &

    \\
    $ x^{(l-1)} (\lambda' ) \to x^{(l)}(\lambda ) $    &  $x^{(l-1)}(u',\lambda' ) \to x^{(l)}(u,\lambda )$ & $x^{(l-1)} (u',\alpha',\lambda' ) \to x^{(l)}(u,\alpha,\lambda )$
    \\
                                                                           &                                                                           &
    \\
\hline
    $\lambda' \to \lambda$: dense & $u' \to u$: spatial convolution  & $u' \to u$, $\alpha' \to \alpha$: joint convolution \\  
                                                     & $\lambda' \to \lambda$: dense & $\lambda' \to \lambda$: dense \\                                                   
                                                     
 \hline
\end{tabular}
\caption{\label{tab:snn}
\small
Comparison of a fully-connected layer, 
a regular convolutional layer,
and a rotation-equivariant  convolutional layer with group-indexed channels. 
}
\end{centering}
\vskip -0.2 in
\end{table}

\subsection{Rotation-equivariant CNN}\label{sec:2-1}

A rotation-equivariant CNN indexes the channels by the $SO(2)$ group \cite{weiler2017learning,zhou2017oriented}:
The $l$-th layer output is written as 
$x^{(l)}(u,\alpha,\lambda)$, the position $u \in \R^2$, the rotation $\alpha \in S^1$, and $\lambda \in [M_l]$,
$M_l$ being the number of unstructured channel indices.
Throughout the paper $[m]$ stands for the set $\{1,\cdots, m\}$.
We denote the group $SO(2)$ also by the circle $S^1$ since the former
 is parametrized by
  the rotation angle.
The convolutional filter at the $l$-th layer is represented as 
$
W^{(l)}_{\lambda', \lambda}(v, \alpha)$, 
$\lambda' \in [M_{l-1}]$, 
$\lambda \in [M_l]$, 
$v \in \R^2$, 
$\alpha \in S^1$,
except for the 1st layer where there is no indexing of $\alpha$.
In practice, $S^1$ is discretized into $N_\theta$ points on $(0, 2\pi)$.
Throughout the paper we denote the summation over $u$ and $\alpha$ by continuous integration,
and the notation $\int_{S^1} (\cdots) d\alpha$ means $\frac{1}{2\pi}\int_0^{2\pi} (\cdots) d\alpha$. 

To guarantee group-equivariant representation (c.f. Section \ref{sec:3-1}), 
the convolution is jointly computed over $\R^2$ and $SO(2)$. 
Specifically, 
let the rotation by angle $t$ be $\Theta_t: \R^2 \to \R^2$,
in the 1st layer,
\begin{equation}\label{eq:joint-conv-1}
x^{(1)}(u,\alpha,\lambda)
=
\sigma\left( 
\sum_{\lambda'=1}^{M_{0}} \int_{S^1} \int_{\R^2}
x^{(0)}( u + v', \lambda') W^{(1)}_{\lambda', \lambda} ( \Theta_{\alpha} v' ) dv' + b^{(1)}(\lambda)
\right).
\end{equation}
For $l>1$, the joint convolution of $u$ and $\alpha$ takes the form
\begin{equation}\label{eq:joint-conv-2}
x^{(l)}(u,\alpha,\lambda)
=
\sigma\left( 
\sum_{\lambda'=1}^{M_{l-1}} \int_{S^1} \int_{\R^2}
x^{(l-1)}( u + v', \alpha', \lambda') W^{(l)}_{\lambda', \lambda}(\Theta_{\alpha} v', \alpha' - \alpha) dv' d\alpha' + b^{(l)}(\lambda)
\right).
\end{equation}
Table \ref{tab:snn} compares
a rotation-equivariant CNN and a regular CNN.

While group equivariance is a desirable property,
the model size and computation can be increased significantly 
due to the extra index $\alpha \in [N_{\theta}]$.
We will address this issue by introducing the bases decomposition of the filters. 

\subsection{Decomposed Filters Under Steerable Bases}

We decompose the filters with respect to $u$ and $\alpha$ simultaneously: 
Let $\{\psi_k\}_k$ be a set of bases on the unit 2D disk,
and $\{ \varphi_{m} \}_m$ be bases on $S^1$.
At the $l$-th layer,
 let $j_l$ be the scale of the filter in $u$,
 and $\psi_{j,k} = 2^{-2j} \psi_k( 2^{-j} u)$ 
 (the filter is supported on the disk of radius $2^{j_l}$).
 Since we use continuous convolutions, 
 the down-sampling by ``pooling" is modeled by the rescaling of the filters in space.
The decomposed filters are of the form
\begin{equation}\label{eq:al-2}
W^{(1)}_{\lambda', \lambda}(v)  
 = \sum_k a^{(1)}_{\lambda', \lambda}(k) \psi_{j_1, k}(v), 
~~
W^{(l)}_{\lambda', \lambda}(v, \beta)  
 = \sum_k \sum_m a^{(l)}_{\lambda', \lambda}(k,m) \psi_{j_l, k}(v) \varphi_m( \beta),~ l>1,
\end{equation}
which is illustrated in Figure \ref{fig:rotdcf1} (for $l > 1$).
We use Fourier-Bessel (FB) bases for $\{ \psi_{k} \}_k$ which are steerable, 
and Fourier bases for  $\{ \varphi_{m}\}_m$,
so that the operation of rotation is a diagonalized linear transform under both bases. 
Specifically, in the complex-valued version,
\begin{equation}\label{eq:rot-alg-a}
\psi_k( \Theta_ t  v) = e^{- i m(k) t} \psi_k(v), ~ \forall k, 
~~~
\varphi_m(  \alpha -t ) = e^{- i m t} \varphi_m(  \alpha  ), ~ \forall m.
\end{equation}
This means that after the convolutions on $\R^2 \times S^1$ with the bases $\psi_k( v) \varphi_l(\alpha)$
are computed for all $k$ and $l$, both up to certain truncation, 
the joint convolution \eqref{eq:joint-conv-1}, \eqref{eq:joint-conv-2} with all rotated filters 
can be calculated by the algebraic manipulation of the expansion coefficients $a^{(l)}_{\lambda', \lambda}(k,m)$,
and without any re-computation of the spatial-rotation joint convolution.
Standard real-valued versions of the bases $\psi_k$ and $\varphi_m$ in $\sin$'s and $\cos$'s are used in practice.
During training, only the expansion coefficients $a$'s are updated, and the bases are fixed.

Apart from the saving of parameters and computation, which will be detailed next, 
the bases truncation also regularizes
the convolutional filters by discarding the high frequency components.
As a result, RotDCF Net reduces response to those components in the input at all layers, 
which barely affects recognition performance and improves the robustness of the learned feature.
The theoretical properties of RotDCF Net, particularly the representation stability, will be analyzed in Section \ref{sec:3}. 

\subsection{Numbers of Parameters and Computation Flops}\label{sec:2-3}

In this section we analyze a single convolutional layer,
and numbers for specific networks are shown in Section \ref{sec:4}.
Implementation and memory issues will be discussed in Section \ref{sec:5}.

{\bf Number of trainable parameters}: 
In a regular CNN, a convolutional layer of size $L \times L \times M_0' \times M_0$ has $L^2M_0'M_0$ parameters.
In an equivariant CNN, a joint convolutional filter is of size $L \times L \times N_\theta \times M' \times M$, 
so that the number of parameters is $L^2 N_\theta M' M$.
In a RotDCF Net, $K$ bases are used in space and $K_\alpha$ bases across the angle $\alpha$,
so that the number of parameters is $K K_\alpha M' M$.
This gives a reduction of $\frac{K}{L^2}\cdot \frac{K_\alpha}{ N_\theta}$
compared to non-bases equivariant CNN.
In practice, after switching from a regular CNN to a RotDCF Net, 
typically  $M \le \frac{1}{2}M_0$ or more due to the adoption of filters in all orientations. 
The factor $\frac{K}{L^2}$ is usually between $\frac{1}{8}$ and $\frac{1}{3}$ depending on the network and the problem \cite{qiu2018dcfnet}.
In all the experiments in Section \ref{sec:4}, $K_\alpha$ is typically 5, and $N_\theta = 8$ or 16.
This means that RotDCF Net achieves a significant parameter reduction from the non-bases equivariant CNN,
and even reduces parameters from a regular CNN by a factor of $\frac{1}{2}$ or more.
 
{\bf Computation in a forward pass}:
When the input and output are both $W\times W$ in space, the forward pass  in a regular convolutional layer needs $M_0'M_0W^2 (1+2L^2)$ $\sim 2L^2 M_0'M_0W^2$
flops. (Each convolution with a $L \times L $ filter takes $2L^2W^2$, and there are $M_0'M_0$ convolution operations,
plus that the summation over $\lambda'$ takes $W^2M_0'M_0$ flops.)
In a rotation equivariant CNN without using bases, an convolutional layer would take $\sim 2 M'MW^2 L^2 N_\theta^2$ flops. 
In a RotDCF layer, the computation consists of three parts:
(1) The inner-product with $\varphi_m$ bases takes $W^2M' \cdot 2N_\theta K_\alpha$ flops.
(2) The spatial convolution with the $\psi_k$ bases takes $ K_\alpha M' K \cdot 2L^2W^2$ flops.
(3) The multiplication with $a_{\lambda',\lambda}(k,m) e^{-im(k) \alpha-i m \alpha} $
and summation over $\lambda', k, m$ takes
$M N_\theta (4 K K_\alpha M' + 2 W^2 K K_\alpha M')$ flops (real-valued version).
Putting together, the total is 
$2M'W^2K_\alpha (N_\theta + L^2 K + M N_\theta K)$,
and when $M$ is large, the third term dominates and it gives $2M'MW^2 K_\alpha K N_\theta $.
Thus the reduction by using bases-decomposed filters is again a factor of $\frac{K}{L^2}\cdot \frac{K_\alpha}{ N_\theta}$,
and the relative ratio with a regular CNN is about
$\frac{M' M }{M_0' M_0}\cdot \frac{K K_\alpha N_\theta}{L^2}$.

In summary, RotDCF Net achieves a reduction of $\frac{K}{L^2}\cdot \frac{K_\alpha}{ N_\theta}$ from non-bases equivariant CNNs,
 in terms of both model size and computation. 
With typical network architectures, 
RotDCF Net may be of a smaller model size than regular CNNs. 

\section{Theoretical Analysis of Deep Features}\label{sec:3}

This section presents two analytical results: 
(1) Joint convolution \eqref{eq:joint-conv-1}, \eqref{eq:joint-conv-2} is sufficient and actually necessary to obtain rotation equivariance;
(2) Stability of the equivariant representation with respect to input variations is proved under generic conditions.
This is important in practice since rotations are never perfect.

\subsection{Group-equivariant Property}\label{sec:3-1}

Suppose that the input image undergoes some arbitrary rotation, 
and consider the effect on the $l$-th layer output.
Let rotation around point $u_{0}$ by angle $t$ be denoted by
$\rho=\rho_{u_{0},t}$, i.e.
$\rho_{u_0, t} u=u_{0}+\Theta_{t}(u-u_{0})$, for any $ u \in \R^2$,
and the transformed image by
$D_\rho x^{(0)}(u,\lambda) = x^{(0)}(\rho_{u_0,t}u, \lambda)$, for any $\lambda \in [M_0]$.
 We also define the action $T_\rho$ on the $l$-th layer output $x^{(l)}$, $l > 0$, as
\begin{equation}
\label{eq:def-Trho-l}
T_\rho x^{(l)}(u, \alpha, \lambda) = x^{(l)}( \rho_{u_0,t}u, \alpha - t, \lambda), 
\quad \forall \lambda \in [M_l].
\end{equation}

The following theorem, proved in Appendix, 
 shows that RotDCF Net produces group-equivariant features at all layers in the sense of 
\begin{equation}
\label{eq:so2-equiv}
x^{(l)}[ D_\rho x^{(0)}] =  T_\rho x^{(l)}[ x^{(0)}].
\end{equation}
Furthermore, 
the scheme defined in \eqref{eq:joint-conv-1}, \eqref{eq:joint-conv-2}
 is the unique design for a CNN with $SO(2)$-indexed channels that achieves \eqref{eq:so2-equiv}. 
In other words, the joint convolution \eqref{eq:joint-conv-1}, \eqref{eq:joint-conv-2} is necessary in our context. 

 \begin{theorem}\label{thm:equiv}
 In a RotDCF Net with $SO(2)$-indexed channels, 
 let $x^{(l)}[x^{(0)}]$ be the output at the $l$-th layer from input $x^{(0)}(u,\lambda)$.
 The relation \eqref{eq:so2-equiv} holds for all $l$ if and only if
 the convolutional layers are given by \eqref{eq:joint-conv-1}, \eqref{eq:joint-conv-2}.
 \end{theorem}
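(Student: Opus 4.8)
The plan is to establish the two implications separately: \emph{sufficiency} (the joint convolutions \eqref{eq:joint-conv-1}, \eqref{eq:joint-conv-2} yield \eqref{eq:so2-equiv}) by a direct change-of-variables computation carried out inductively in $l$, and \emph{necessity} (no other layer design is equivariant) by reducing \eqref{eq:so2-equiv} to a statement about a single linear integral operator and then classifying the admissible kernels.

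For sufficiency I would induct on $l$. In the base case $l=1$, I substitute $D_\rho x^{(0)}$ into \eqref{eq:joint-conv-1}; using the identity $\rho_{u_0,t}(u+v') = (\rho_{u_0,t}u) + \Theta_t v'$ the argument of $x^{(0)}$ splits, and the substitution $w=\Theta_t v'$ (Lebesgue measure on $\R^2$ is rotation invariant) together with the group law $\Theta_\alpha\Theta_{-t}=\Theta_{\alpha-t}$ converts $W^{(1)}_{\lambda',\lambda}(\Theta_\alpha v')$ into $W^{(1)}_{\lambda',\lambda}(\Theta_{\alpha-t}w)$; reading off the definition, the bracketed expression is exactly $x^{(1)}(\rho_{u_0,t}u,\alpha-t,\lambda)$. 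Since $\sigma$ acts pointwise and $b^{(1)}$ depends only on $\lambda$, both commute with $T_\rho$, so \eqref{eq:so2-equiv} survives the bias and nonlinearity. The inductive step at layer $l>1$ is the same computation on \eqref{eq:joint-conv-2}, now with the extra substitution $\beta=\alpha'-t$ on the group variable (the normalized Haar measure $d\alpha'$ is shift invariant): the factor $W^{(l)}_{\lambda',\lambda}(\,\cdot\,,\alpha'-\alpha)$ becomes $W^{(l)}_{\lambda',\lambda}(\,\cdot\,,\beta-(\alpha-t))$, which is precisely what is needed, while the inductive hypothesis $x^{(l-1)}[D_\rho x^{(0)}]=T_\rho x^{(l-1)}[x^{(0)}]$ supplies the transformed lower-layer feature under the integral.

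For necessity I would proceed in three steps. \textbf{(i)} Pass from the network-level identity \eqref{eq:so2-equiv}, required at every $l$, to equivariance of each individual layer map: at $l=1$ this is immediate from \eqref{eq:so2-equiv} since $x^{(0)}$ ranges over all inputs, and for $l>1$ one peels off the outer layers, combining \eqref{eq:so2-equiv} at consecutive levels with the equivariance already proved for lower layers, to get $L_l(T_\rho y)=T_\rho L_l(y)$ for $y$ in the range of the first $l-1$ layers. \textbf{(ii)} Strip the pointwise nonlinearity and the channel-only bias: because $\sigma$ is applied pointwise in $(u,\alpha,\lambda)$ and $b^{(l)}$ is independent of $(u,\alpha)$, they commute with $T_\rho$, so the equivariance descends to the underlying linear integral operator $A^{(l)}$ with kernel $K^{(l)}_{\lambda',\lambda}(u,u',\alpha,\alpha')$; the same computation forces the bias to be $(u,\alpha)$-independent and $\sigma$ to be a fixed pointwise map, so these features of the design are themselves necessary. \textbf{(iii)} Classify the kernels: since \eqref{eq:so2-equiv} is demanded for every rotation $\rho_{u_0,t}$ about every center and these compose to the orientation-preserving rigid motions of $\R^2$, $A^{(l)}$ must intertwine the whole action; extracting the translation subgroup forces $K^{(l)}$ to depend on $u,u'$ only through $v'=u'-u$, and then extracting rotation about the origin, followed by specializing the rotation angle to $\alpha$, forces $K^{(l)}_{\lambda',\lambda}=W^{(l)}_{\lambda',\lambda}(\Theta_\alpha v',\alpha'-\alpha)$ for some $W^{(l)}$, which is exactly \eqref{eq:joint-conv-2} (and \eqref{eq:joint-conv-1} when there is no incoming $\alpha'$). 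The discrete case is identical with $SO(2)$ replaced by the cyclic subgroup of $N_\theta$ rotations.

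The hard part will be step (i) of necessity: making rigorous that layer-wise equivariance follows from network-wise equivariance requires the inputs actually seen by layer $l$ --- the outputs of layer $l-1$ as the input image and (in the ``unique design'' reading) the lower-layer filters vary --- to form a rich enough set, ideally total in the relevant function space, so that an intertwining relation verified only there extends to the full domain; with a saturating nonlinearity such as ReLU this needs a small separate argument, e.g. varying the lower-layer biases and filters, or equivalently testing the linear operator $A^{(l)}$ before $\sigma$ is applied. Everything else is bookkeeping with the $SO(2)$ group law and measure-preserving changes of variables.
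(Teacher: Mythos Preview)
Your sufficiency argument is essentially identical to the paper's: both reduce \eqref{eq:so2-equiv} to a layer-wise intertwining and then do the change of variables $v'\mapsto\Theta_t^{-1}v'$ (and $\alpha'\mapsto\alpha'+t$ for $l>1$) using $\rho_{u_0,t}(u+v')=\rho_{u_0,t}u+\Theta_t v'$.

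Your necessity argument is correct but genuinely more ambitious than the paper's. The paper takes for granted that the layer is already a \emph{spatial convolution} --- it writes the unknown filter as $w^{(l)}(v;\lambda',\lambda,\alpha',\alpha)$ depending on $v=u'-u$ --- and only derives the dependence on $(v,\alpha',\alpha)$: equivariance for all $\rho_{u_0,t}$ forces $w^{(l)}(v;\lambda',\lambda,\alpha',\alpha-t)=w^{(l)}(\Theta_t^{-1}v;\lambda',\lambda,\alpha'+t,\alpha)$, and setting $t=\alpha$ gives \eqref{eq:joint-conv-2}. It also simply \emph{asserts} that \eqref{eq:so2-equiv} for all $l$ is equivalent to the layer-wise identity $x^{(l)}[T_\rho x^{(l-1)}]=T_\rho x^{(l)}[x^{(l-1)}]$ for arbitrary $x^{(l-1)}$, without discussing the richness issue you flag in step~(i). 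By contrast you start from a general kernel $K^{(l)}_{\lambda',\lambda}(u,u',\alpha,\alpha')$, observe that rotations about all centers generate $SE(2)$ and hence translation equivariance is a consequence rather than an assumption, and you explicitly isolate the reduction from network-wise to layer-wise equivariance as the nontrivial step, proposing to handle it by varying lower-layer filters and biases. Your route thus proves a stronger statement and closes a gap the paper leaves implicit; the paper's route is shorter because it bakes the convolutional ansatz and the layer-wise equivalence into the hypotheses of ``a RotDCF Net with $SO(2)$-indexed channels.''
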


\subsection{Representation Stability under Input Variations}\label{sec:3-2}

{\bf Assumptions on the RotDCF layers.}
Following \cite{qiu2018dcfnet}, we make the following generic assumptions on the convolutional layers: 
First, 
\begin{itemize}
\item[ ]
{\bf (A1)} Non-expansive sigmoid: 
$\sigma: \R \to \R$ is non-expansive.
\end{itemize}
We also need a boundedness assumption on the convolutional filters $W^{(l)}$ for all $l$.
Specifically, define
\begin{equation}\label{eq:def-Al}
A_l : = 
\pi \max \{ 
	 \sup_{\lambda} \sum_{\lambda'=1}^{M_{l-1}} \| a^{(l)}_{\lambda', \lambda}\|_{\text{FB}}, \, 
	\sup_{\lambda'}  \frac{M_{l-1}}{M_l}  \sum_{\lambda=1}^{M_{l}} \| a^{(l)}_{\lambda', \lambda}\|_{\text{FB}}  \},
\end{equation}
where the weighted norm $\| \cdot \|_{\text{FB}}$ of $a_{\lambda', \lambda}^{(l)}$ is defined as 
\begin{equation}\label{eq:norm-al}
\| a^{(1)}_{\lambda', \lambda} \|_{\text{FB}}^2 
= \sum_k \mu_k  (a^{(1)}_{\lambda', \lambda}(k))^2, 
\quad
\| a^{(l)}_{\lambda', \lambda} \|_{\text{FB}}^2 
= \sum_k \sum_m \mu_k (a^{(l)}_{\lambda', \lambda}(k,m))^2, \quad l>1,
\end{equation}
$\mu_k$ being the Dirichlet Laplacian eigenvalues of the unit disk in $\R^2$. 
Second, we assume that
\begin{itemize}
\item[ ]
{\bf (A2)} Boundedness of filters: In all layers, $A_l \le 1$.
\end{itemize}
This implies a sequence of boundedness conditions on the convolutional filters in all layers, c.f. Proposition \ref{prop:Bl-Cl-Dl}.  
The validity of (A2) can be qualitatively fulfilled by normalization layers which is standard in practice. 
As the stability results in this section will be derived under (A2), 
this assumption motivates truncation of the bases expansion to only include low-frequency $k$ and $m$'s,
which is implemented in Section \ref{sec:4}. 

{\bf Non-expansiveness of the network mapping.}
Let the $L^2$ norm of $x^{(l)}$ be defined as 
\[
\|x^{(l)}\|^2 = \frac{1}{M_l} \sum_{\lambda=1}^{M_l} \frac{1}{|\Omega|}\int_{\R^2} \int_{S^1} x^{(l)}(u,\alpha,\lambda)^2 du  d\alpha,
\quad l\ge 1
\]
and $ \|x^{(0)}\|^2 = \frac{1}{M_0} \sum_{\lambda} \frac{1}{|\Omega|}\int_{\R^2} x^{(0)}(u, \lambda)^2 du $.
$\Omega$ is the domain on which $x^{(0)}$ is supported, usually $\Omega= [-1,1]\times [-1,1] \subset \R^2$.  
The following result is proved in Appendix:

\begin{proposition}\label{prop:l2stable}
In a RotDCF Net, under (A1), (A2), for all $l$,

(a) 
The mapping of the $l$-th convolutional layer (including $\sigma$), denoted as $x^{(l)}[x^{(l-1)}]$, is non-expansive,
i.e., $ \| x^{(l)}[ x_1 ] -  x^{(l)}[ x_2 ] \| \le \| x_1 - x_2\| $ for arbitrary $x_1$ and $x_2$. 

(b) $\| x_c^{(l)} \| \le \| x_c^{(l-1)} \|$ for all $l$, where 
$x_c^{(l)}(u,\alpha, \lambda) = x^{(l)}(u,\alpha, \lambda) - x_0^{(l)}(\lambda)$ (without index $\alpha$ when $l$=1) is the centered version of $x^{(l)}$
by removing $x_0^{(l)}$, defined to be the output at the $l$-th layer from a zero bottom-layer input.
 As a result,
 $\| x_c^{(l)} \|  \le \|x_c^{(0)}\| = \|x^{(0)}\|$.
\end{proposition}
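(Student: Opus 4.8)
The plan is to establish part (a) first and then derive part (b) as a consequence, mirroring the structure used in \cite{qiu2018dcfnet} but carrying the extra $SO(2)$ index through the estimates. The main object to control is the linear operator underlying one RotDCF layer, namely the joint convolution in \eqref{eq:joint-conv-1}, \eqref{eq:joint-conv-2} before the bias and sigmoid. Since $\sigma$ is non-expansive by (A1) and the bias is a fixed translation, it suffices to show that this linear convolution operator has operator norm at most $1$ on the weighted $L^2$ space defined in the statement. The reduction to a linear-operator-norm estimate is routine; the substance is bounding that norm.

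For the linear bound, I would expand a filter $W^{(l)}_{\lambda',\lambda}$ in the steerable basis $\psi_{j_l,k}(v)\varphi_m(\beta)$ with coefficients $a^{(l)}_{\lambda',\lambda}(k,m)$, and use the steerability relations \eqref{eq:rot-alg-a} so that the rotated filter $W^{(l)}_{\lambda',\lambda}(\Theta_\alpha v', \alpha'-\alpha)$ is again an expansion with the same coefficients modulated by unimodular phases $e^{-im(k)\alpha}e^{-im\alpha}$. The key analytic input is the Poincaré-type inequality on the unit disk: for a function $f=\sum_k c_k\psi_k$, the $L^2$ norm of the convolution $g\mapsto g * f$ on $\R^2$ is controlled by $\big(\sum_k \mu_k c_k^2\big)^{1/2}$ times a constant involving $\pi$, which is exactly why the weighted norm $\|\cdot\|_{\text{FB}}$ in \eqref{eq:norm-al} is defined with the Dirichlet Laplacian eigenvalues $\mu_k$. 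Combining this spatial estimate with Parseval in the $\varphi_m$ basis over $S^1$, then applying a Schur-test / Cauchy--Schwarz argument over the unstructured channel indices $\lambda',\lambda$ — where the two symmetric suprema in the definition of $A_l$ in \eqref{eq:def-Al} play the roles of the row-sum and column-sum bounds — yields that the layer's linear operator norm is bounded by $A_l$. Assumption (A2), $A_l\le 1$, then gives non-expansiveness, proving (a).

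For part (b), I would note that the centered output $x_c^{(l)}$ is obtained by applying the same layer map to two inputs, the actual $x^{(l-1)}$ and the ``zero-bottom-input'' reference that produces $x_0^{(l)}$; more precisely one shows $x^{(l)}[x^{(l-1)}]-x_0^{(l)}$ equals the layer map applied with the constant channel-wise offsets subtracted, so that $\|x_c^{(l)}\| = \|x^{(l)}[x^{(l-1)}] - x^{(l)}[\text{ref}]\| \le \|x^{(l-1)} - \text{ref}\| = \|x_c^{(l-1)}\|$ by part (a). The one subtlety is handling the first layer separately (no $\alpha$ index, filters of the form \eqref{eq:al-2} with only a $k$-sum and norm \eqref{eq:norm-al}), and checking that the normalization of the $L^2$ norms by $\frac{1}{M_l}$ and $\frac{1}{|\Omega|}$ is consistent with the $\frac{M_{l-1}}{M_l}$ reweighting in \eqref{eq:def-Al}; these are bookkeeping points rather than real obstacles. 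Iterating the bound down to $l=0$ gives $\|x_c^{(l)}\|\le\|x_c^{(0)}\|=\|x^{(0)}\|$, the last equality holding because the zero-input reference at layer $0$ is just $0$.

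The step I expect to be the main obstacle is the spatial Poincaré-type estimate together with the correct constant: one must verify that convolving against $\psi_{j_l,k}$ at scale $2^{j_l}$, after the change of variables that makes the disk the unit disk, produces exactly the factor $\mu_k$ in the bound (so that the weighted norm is the natural one), and that the scaling in $j_l$ cancels against the $\frac{1}{|\Omega|}$ normalization so no scale-dependent constant leaks in. Getting the interaction between this spatial bound, the $SO(2)$ Parseval identity, the phase factors from steerability, and the Schur test over channels to combine into the clean constant $A_l$ — rather than some larger constant — is where the care is required; everything downstream is formal manipulation and can be relegated to the Appendix.
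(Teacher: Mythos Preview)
Your reduction to a linear operator-norm estimate, and the derivation of (b) from (a), are correct and match the paper's structure. The divergence is in how you bound the operator norm of the joint convolution for a single $(\lambda',\lambda)$ pair. You propose to exploit steerability \eqref{eq:rot-alg-a} to pull out the phase factors, apply a spatial Poincar\'e estimate for the $\psi_k$-convolution, and combine this with Parseval in the $\varphi_m$ basis over $S^1$. Your own worry about whether ``the interaction between the spatial bound, the $SO(2)$ Parseval identity, the phase factors and the Schur test combine into the clean constant $A_l$'' is well-founded: the map in \eqref{eq:joint-conv-2} is a group convolution on the non-abelian rigid-motion group $\R^2\rtimes SO(2)$, the phases $e^{-i(m(k)+m)\alpha}$ collide in frequency, and the abelian $S^1$-Parseval identity does not diagonalize it. That route tends to produce cross terms or an extra factor depending on $K$, and it is not clear it reaches $A_l$ sharply.

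The paper avoids all of this. It simply applies Young's inequality for the group convolution: for each $(\lambda',\lambda)$ the $L^2\!\to\! L^2$ norm is bounded by the joint $L^1$ norm $B^{(l)}_{\lambda',\lambda}=\int_{\R^2}\int_{S^1}|W^{(l)}_{\lambda',\lambda}(v,\beta)|\,dv\,d\beta$ (using that $\int|w(\Theta_\alpha v)|\,dv=\int|w(v)|\,dv$), and then the Schur test over channels gives the bound $B_l$. Neither the basis decomposition nor steerability is used here; the Remark after the proof notes that only $B_l\le 1$ is needed. The FB structure enters only through the separate Proposition~\ref{prop:Bl-Cl-Dl}, which proves $B^{(l)}_{\lambda',\lambda}\le\pi\|a^{(l)}_{\lambda',\lambda}\|_{\text{FB}}$ by a joint Cauchy--Schwarz and Poincar\'e inequality over $(v,\beta)$, whence $B_l\le A_l$. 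For part (b), the paper first verifies by induction that $x_0^{(l)}$ is constant in $(u,\alpha)$, i.e.\ depends on $\lambda$ only; once that is established, $x_0^{(l)}=x^{(l)}[x_0^{(l-1)}]$ and the bound follows directly from (a) with inputs $x^{(l-1)}$ and $x_0^{(l-1)}$ --- there is no need to ``subtract offsets inside the layer map''.
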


{\bf Insensitivity to input deformation.} 
We consider the deformation of the input ``module" to a global rotation. 
Specifically, let the deformed input be of the form
$D_\rho \circ D_\tau x^{(0)}$,
where $D_\rho$ is as in Section \ref{sec:3-1},
$\rho =\rho_{u_0,t}$ being a rigid 2D rotation,
and $D_\tau$ is a small deformation in space defined by  
\begin{equation}\label{eq:def-Dtau}
D_{\tau}x^{(0)}(u,\lambda)=x^{(0)}( u - \tau(u),\lambda),\quad \forall u \in \R^2,\, \lambda\in [ M_0],
\end{equation}
with $\tau:\mathbb{R}^{2}\to\mathbb{R}^{2}$ is $C^{2}$.
Following \cite{qiu2018dcfnet},
we assume the small distortion condition, which is 
\begin{itemize}
\item[ ]
{\bf (A3)} Small distortion: $|\nabla\tau|_{\infty} = \sup_{u} \| \nabla \tau(u) \| <\frac{1}{5}$,
with
$\|\cdot\|$ being the operator norm.
\end{itemize}
The mapping $u \mapsto u-\tau(u)$ is locally invertible,
and the constant $\frac{1}{5}$ is chosen for convenience. 
$T_\rho$ is defined in \eqref{eq:def-Trho-l}, and
the stability result is summarized as 

\begin{theorem}\label{thm:deform1}
Let $\rho = \rho_{u_0,t}$ be an arbitrary rotation in $\R^2$, around $u_0$ by angle $t$, 
and let $D_\tau$ be a small deformation. 
In a RotDCF Net, under (A1), (A2), (A3),  $c_1=4$, $c_2=2$, for any $L$,
\[
\|x^{(L)}[D_\rho\circ D_\tau x^{(0)}]  - T_\rho x^{(L)}[ x^{(0)} ] \| 
\le (  2c_1 L |\nabla \tau|_\infty + c_2 2^{-j_L} |\tau|_\infty ) \|x^{(0)}\|.
\]
\end{theorem}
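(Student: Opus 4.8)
The strategy is to first strip off the rigid rotation $\rho$ by invoking the exact equivariance of Theorem~\ref{thm:equiv}, which reduces the claim to a pure deformation–stability estimate, and then to prove the latter by a layer-by-layer telescoping argument in the style of DCFNet, using the non-expansiveness of Proposition~\ref{prop:l2stable} together with a single per-layer commutator bound for the joint convolution.

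\emph{Reducing to $\rho=\mathrm{id}$.} By \eqref{eq:so2-equiv} in Theorem~\ref{thm:equiv}, $T_\rho x^{(L)}[x^{(0)}]=x^{(L)}[D_\rho x^{(0)}]$. Writing $\rho=\rho_{u_0,t}$ and setting $\tau'(u):=\Theta_{-t}\,\tau(\rho_{u_0,t}u)$, a direct change of variables gives $D_\rho\circ D_\tau=D_{\tau'}\circ D_\rho$ as maps on $x^{(0)}$, while $\nabla\tau'(u)=\Theta_{-t}(\nabla\tau)(\rho u)\Theta_t$ yields $|\nabla\tau'|_\infty=|\nabla\tau|_\infty$ and $|\tau'|_\infty=|\tau|_\infty$ (conjugation by the rotation $\Theta_t$ preserves operator norm), so (A3) still holds for $\tau'$ with the same constants. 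Since the norms of Section~\ref{sec:3-2} are invariant under rotation in $u$ and $\alpha$, $D_\rho$ is an $L^2$-isometry, so $\|D_\rho x^{(0)}\|=\|x^{(0)}\|$, and with $y:=D_\rho x^{(0)}$ the quantity to bound becomes $\|x^{(L)}[D_{\tau'}y]-x^{(L)}[y]\|$. It therefore suffices to prove: for every input $y$ and every deformation obeying (A3),
\[
\|x^{(L)}[D_\tau y]-x^{(L)}[y]\| \le \big( 2c_1 L\,|\nabla\tau|_\infty + c_2\,2^{-j_L}|\tau|_\infty \big)\,\|y\|.
\]

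\emph{Splitting and telescoping.} Let $D_\tau$ also denote the spatial-only action $D_\tau x^{(l)}(u,\alpha,\lambda)=x^{(l)}(u-\tau(u),\alpha,\lambda)$ on deeper layers, and let $\Phi_l$ be the $l$-th layer map $x\mapsto\sigma(\measure^{(l)}x+b^{(l)})$, with $\measure^{(l)}$ its linear joint-convolution part. Triangle inequality splits the error as $\|x^{(L)}[D_\tau y]-D_\tau x^{(L)}[y]\|+\|D_\tau x^{(L)}[y]-x^{(L)}[y]\|$, and I bound these by $2c_1 L|\nabla\tau|_\infty\|y\|$ and $c_2 2^{-j_L}|\tau|_\infty\|y\|$ respectively. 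For the first term, inserting the hybrids $\Phi_L\cdots\Phi_{l+1}\,D_\tau\,\Phi_l\cdots\Phi_1 y$ for $l=0,\dots,L$ and using that compositions of layer maps are non-expansive (Proposition~\ref{prop:l2stable}(a)) reduces the bound to $\sum_{l=1}^L\|\Phi_l(D_\tau x^{(l-1)})-D_\tau\Phi_l(x^{(l-1)})\|$, where $x^{(l-1)}=x^{(l-1)}[y]$. Because $D_\tau$ is a change of variables and $b^{(l)}$ is constant in $u$, (A1) gives $\|\Phi_l(D_\tau x^{(l-1)})-D_\tau\Phi_l(x^{(l-1)})\|\le\|[\measure^{(l)},D_\tau]\,x^{(l-1)}\|$; and since $\measure^{(l)}$ sends $u$-constant functions to $u$-constant functions this commutator annihilates the uncentered part, so it acts only through $x_c^{(l-1)}$, for which Proposition~\ref{prop:l2stable}(b) gives $\|x_c^{(l-1)}\|\le\|y\|$.

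\emph{The per-layer commutator estimate (the crux) and the tail.} What remains for the first term is $\|[\measure^{(l)},D_\tau]f\|\le 2c_1|\nabla\tau|_\infty\|f_c\|$ for each $l$; summing over the $L$ layers then closes the bound. This is the natural lift of the DCFNet spatial-convolution commutator estimate — which rests on the Dirichlet–Laplacian scaling $\|\nabla\psi_{j,k}\|=2^{-j}\sqrt{\mu_k}\,\|\psi_{j,k}\|$ (hence the weighted FB norm in \eqref{eq:norm-al}), Young's inequality, and the boundedness (A2). The only genuinely new ingredient is the joint geometry: since $D_\tau$ touches only $u$ it commutes with the Fourier transform in $\alpha$, and in the $\alpha$-frequency variable the $\alpha'$-convolution in \eqref{eq:joint-conv-2} becomes a multiplication, so $\measure^{(l)}$ acts slice-wise, each slice a spatial convolution with a fixed combination of the rotated filters $W^{(l)}_{\lambda',\lambda}(\Theta_\alpha\,\cdot)$; by Parseval in $\alpha$, $\|[\measure^{(l)},D_\tau]f\|^2$ breaks into a sum over slices of spatial commutator norms to which the DCFNet estimate applies, and the FB norm is rotation-invariant ($\Theta_\alpha$ only recombines FB bases within a fixed eigenvalue $\mu_k$), so $\|W^{(l)}(\Theta_\alpha\cdot)\|_{\mathrm{FB}}=\|W^{(l)}\|_{\mathrm{FB}}$ and (A2) controls every slice; the layer $l=1$ is the same with no $\alpha'$-integral. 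For the second term, $x^{(L)}[y]=\sigma(\measure^{(L)}x^{(L-1)}+b^{(L)})$ with layer-$L$ filters supported in $|v|\le 2^{j_L}$, so (A1) and the mean-value inequality give $\|D_\tau x^{(L)}[y]-x^{(L)}[y]\|\le |\tau|_\infty\,\|\nabla_u(\measure^{(L)}x^{(L-1)})\|$; since $\nabla_u W^{(L)}$ has $L^1$-in-$v$ norm of order $2^{-j_L}\|a^{(L)}\|_{\mathrm{FB}}=O(2^{-j_L})$ under (A2), Young's inequality and $\|x_c^{(L-1)}\|\le\|y\|$ (Proposition~\ref{prop:l2stable}(b)) finish it with $c_2=2$. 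The main obstacle is this per-layer estimate: pushing the DCFNet computation through the joint $\R^2\times S^1$ convolution while carefully tracking the $\pi$-factors entering $A_l$ in \eqref{eq:def-Al} so that the explicit constants $c_1=4$, $c_2=2$ emerge, in tandem with the centering/non-expansiveness bookkeeping in the telescoping.
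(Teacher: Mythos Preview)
Your plan is correct and matches the paper's argument in substance: both use equivariance (Theorem~\ref{thm:equiv}) to strip off $\rho$, then telescope layer-by-layer with Proposition~\ref{prop:l2stable} and a per-layer commutator bound, and close with the gradient tail estimate at layer $L$. The only organizational difference is that you eliminate $\rho$ once at the outset via the conjugation $D_\rho D_\tau=D_{\tau'}D_\rho$ (with $|\tau'|_\infty=|\tau|_\infty$, $|\nabla\tau'|_\infty=|\nabla\tau|_\infty$), whereas the paper carries $\rho$ into Proposition~\ref{prop:deform2} and removes it at each step of the telescope by applying Theorem~\ref{thm:equiv} to turn $\|x^{(l)}[T_\rho D_\tau x^{(l-1)}]-T_\rho D_\tau x^{(l)}[x^{(l-1)}]\|$ into $\|x^{(l)}[D_\tau x^{(l-1)}]-D_\tau x^{(l)}[x^{(l-1)}]\|$; the two are equivalent.

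One technical caveat on your commutator sketch: the $\alpha'$-integral in \eqref{eq:joint-conv-2} is \emph{not} a pure $S^1$-convolution in $\alpha$, because the spatial kernel also depends on $\alpha$ through $\Theta_\alpha v'$, so ``Fourier in $\alpha$ turns it into multiplication and $\mathcal{A}^{(l)}$ acts slice-wise'' is not literally true without first invoking steerability of the $\psi_k$ to separate that phase. The paper's Lemma~\ref{lemma:commuting} avoids this by working directly in real space with the joint $L^1$-type quantities $B^{(l)}_{\lambda',\lambda}$, $C^{(l)}_{\lambda',\lambda}$ of \eqref{eq:def-B-C-D-1}: since $\int|\nabla_v W(\Theta_\alpha v,\beta)|\,dv$ is independent of $\alpha$, the DCFNet spatial argument extends verbatim with an extra $\int_{S^1}d\beta$, which is the cleaner route to the constants $c_1=4$, $c_2=2$.
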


Unlike previous stability results for regular CNNs, 
the above result allows an arbitrary global rotation $\rho$
with respect to which the RotDCF representation is equivariant,
apart from a small ``residual" distortion $\tau$ whose influence can be bounded. 
This is also an important result in practice, 
because most often in recognition tasks the image rotation is not a rigid in-plane one, 
but is induced by the rotation of the object in 3D space. 
Thus the actual transformation of the image may be close to a 2D rotation but is not exact. 
The above result guarantees that in such cases the RotDCF representation undergoes approximately 
an equivariant action of $T_\rho$, which implies consistency of the learned deep features up to a rotation.
The improved stability of RotDCF Net over regular CNNs 
in this situation is observed experimentally in Section \ref{sec:4}. 

To prove Theorem \ref{thm:deform1}, we firstly establish the following approximate equivariant relation for all layers $l$, 
which can be of independent interest, e.g. for estimating the image transformations. All the proofs are left to Appendix.
\begin{proposition}\label{prop:deform2}
In a RotDCF Net, under (A1), (A2), (A3), $c_1=4$, for any $l$,
\[
\| x^{(l)}[D_\rho\circ D_\tau x^{(0)}]  - T_\rho \circ D_\tau x^{(l)}[ x^{(0)} ] \| 
\le  2c_1 l |\nabla \tau|_\infty  \|x^{(0)}\|,
\]
where $D_\tau$ only acts on the space variable $u$ of  $x^{(l)}$ similar to \eqref{eq:def-Dtau}.
\end{proposition}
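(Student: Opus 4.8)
The plan is to deduce Proposition~\ref{prop:deform2} from the \emph{exact} equivariance of Theorem~\ref{thm:equiv} combined with a layer-by-layer deformation-commutation estimate in the style of \cite{qiu2018dcfnet}, by induction on $l$. First I would reduce to a pure spatial deformation: applying Theorem~\ref{thm:equiv} with bottom-layer input $D_\tau x^{(0)}$ yields the exact identity $x^{(l)}[D_\rho\circ D_\tau x^{(0)}] = T_\rho\, x^{(l)}[D_\tau x^{(0)}]$, and $T_\rho$ is an isometry for the $L^2$ norm on $x^{(l)}$ since $u\mapsto\rho_{u_0,t}u$ is a rigid motion (volume-preserving on $\R^2$) and $\alpha\mapsto\alpha-t$ leaves the normalized measure on $S^1$ invariant. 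Consequently
\[
\| x^{(l)}[D_\rho\circ D_\tau x^{(0)}] - T_\rho\circ D_\tau x^{(l)}[x^{(0)}] \| \;=\; \| x^{(l)}[D_\tau x^{(0)}] - D_\tau x^{(l)}[x^{(0)}] \| \;=:\; e_l ,
\]
so it suffices to prove $e_l \le 2c_1 l\,|\nabla\tau|_\infty \|x^{(0)}\|$, where $D_\tau$ now acts only on the spatial variable of $x^{(l)}$.

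For the induction, write $f_l$ for the map of the $l$-th layer (joint convolution $\mathcal{A}^{(l)}$, then add bias, then $\sigma$), so $e_0 = 0$ and
\[
x^{(l)}[D_\tau x^{(0)}] - D_\tau x^{(l)}[x^{(0)}] \;=\; \underbrace{ f_l\big(x^{(l-1)}[D_\tau x^{(0)}]\big) - f_l\big(D_\tau x^{(l-1)}[x^{(0)}]\big) }_{\mathrm{(I)}} \;+\; \underbrace{ f_l(D_\tau w) - D_\tau f_l(w) }_{\mathrm{(II)}}, \quad w := x^{(l-1)}[x^{(0)}].
\]
By non-expansiveness of $f_l$ (Proposition~\ref{prop:l2stable}(a)), the term $\mathrm{(I)}$ has norm at most $e_{l-1}$. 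For $\mathrm{(II)}$ I would peel off $\sigma$ using (A1) and the fact that $D_\tau$ commutes with the pointwise $\sigma$ and fixes spatially-constant functions --- in particular the offset $x_0^{(l-1)}$ and its $\mathcal{A}^{(l)}$-image, which is constant in both $u$ and $\alpha$ --- reducing $\mathrm{(II)}$ to the convolution commutator $\| \mathcal{A}^{(l)} D_\tau w_c - D_\tau \mathcal{A}^{(l)} w_c \|$ applied to the \emph{centered} input $w_c = x_c^{(l-1)}$. Granting the per-layer estimate $\| \mathcal{A}^{(l)} D_\tau w_c - D_\tau \mathcal{A}^{(l)} w_c \| \le 2c_1 |\nabla\tau|_\infty \|w_c\|$ and $\|w_c\| = \|x_c^{(l-1)}\| \le \|x_c^{(0)}\| = \|x^{(0)}\|$ (Proposition~\ref{prop:l2stable}(b)), one gets $e_l \le e_{l-1} + 2c_1 |\nabla\tau|_\infty \|x^{(0)}\|$, hence $e_l \le 2c_1 l\,|\nabla\tau|_\infty \|x^{(0)}\|$ by induction.

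It remains to establish the per-layer convolution commutator, and here I would adapt the spatial commutator lemma of \cite{qiu2018dcfnet} to the joint $\R^2\times S^1$ convolution. For each fixed output angle $\alpha$, $\mathcal{A}^{(l)}$ acts in the spatial variable as a convolution with the rotated filter $W^{(l)}_{\lambda',\lambda}(\Theta_\alpha\,\cdot\,,\alpha'-\alpha)$, followed by a convolution in $\alpha'$ over $S^1$ and a dense combination over $\lambda'$; since $D_\tau$ does not touch the angular variable, commuting it through $\mathcal{A}^{(l)}$ is governed entirely by the spatial part. Because the Fourier--Bessel expansion is steerable, rotating the spatial argument by $\Theta_\alpha$ leaves the weighted norm $\|a^{(l)}_{\lambda',\lambda}\|_{\mathrm{FB}}$ unchanged, and this norm is exactly what controls $\|\nabla_u W^{(l)}\|$ in the commutator (the Dirichlet eigenvalues $\mu_k$ entering precisely because $\|\nabla\psi_{j,k}\| \propto \sqrt{\mu_k}$), so the DCFNet estimate --- which uses (A3) to bound the Jacobian of $u\mapsto u-\tau(u)$ --- applies uniformly in $\alpha$; the $\alpha'$-convolution contributes only a factor bounded via Young's inequality by the $L^1(S^1)$ norm of the angular kernel, itself bounded by its $L^2(S^1)$ norm, which is already built into $\|a^{(l)}_{\lambda',\lambda}\|_{\mathrm{FB}}$ through the sum over $m$; and the sum over $\lambda'$ is a Schur test. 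Collecting these, the operator controlling the commutator is exactly $A_l$ of \eqref{eq:def-Al}, and (A2), $A_l\le 1$, yields the bound with $c_1 = 4$; the first layer $l=1$ is the same argument without the angular convolution.

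The hard part is this last step: faithfully transporting the DCFNet spatial commutator estimate to a joint convolution that carries a separately rotated filter for every output angle plus an extra angular convolution, and checking that the net constant is still exactly $A_l$, so that (A2) forces a layer-independent increment. A secondary but necessary point is the bias/centering bookkeeping that must leave $\|x_c^{(l-1)}\|$ --- rather than $\|x^{(l-1)}\|$ --- on the right-hand side of the per-layer estimate, since it is Proposition~\ref{prop:l2stable}(b) that then collapses the accumulated error into a clean multiple of $\|x^{(0)}\|$.
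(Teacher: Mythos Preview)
Your proposal is correct and matches the paper's proof in all essential ingredients: the per-layer commutator estimate (the paper's Lemma~\ref{lemma:commuting}, giving the bound $c_1(B_l+C_l)|\nabla\tau|_\infty\|x_c^{(l-1)}\|\le 2c_1|\nabla\tau|_\infty\|x_c^{(l-1)}\|$ under (A2)), non-expansiveness and centering from Proposition~\ref{prop:l2stable}, and the telescoping accumulation over layers. The only cosmetic difference is that you invoke Theorem~\ref{thm:equiv} once at the outset to strip off $T_\rho$ and reduce to the pure spatial-deformation quantity $e_l=\|x^{(l)}[D_\tau x^{(0)}]-D_\tau x^{(l)}[x^{(0)}]\|$, whereas the paper keeps $T_\rho$ inside the telescoping sum and applies equivariance layer by layer; the two organizations are equivalent.
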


\section{Experimental Results}\label{sec:4}

In this section, we experimentally test the performance of RotDCF Nets on object classification and face recognition tasks. 
The advantage of RotDCF Net is demonstrated via improved recognition accuracy and robustness to rotations of the object,
not only with in-plain rotations but with 3D rotations as well.
To illustrate the rotation equivariance of the RotDCF deep features, 
we show that a trained auto-encoder with RotDCF encoder layers 
is able to reconstruct rotated digit images from ``circulated" codes. 
All codes will be publicly available.

\begin{table}[b]
\vskip -0.15 in
\scriptsize
\parbox{0.5 \linewidth}{
\begin{centering}
\begin{tabular}{ l | c   | c c  }
 \hline
 \multicolumn{4}{c}{rotMNIST   Conv-3, $N_{\text{tr}}=10K$ }   \\
\hline
                         			&  			Test Acc.	  									&    	\# param.	&  Ratio  \\
\hline
CNN  $M$=32			&  	95.67											& 	2.570$\times 10^5$		&	1.00	\\
DCF  $M$=32, $K$=5	& 	95.58											& 	5.158$\times 10^4$		&	0.20	\\
DCF   $M$=32, $K$=3	& 	95.69											& 	3.104$\times 10^4$	 	&	0.12	\\
\hline
RotDCF  $N_\theta$ = 8  & 													& 					 & 	  \\
$M$=16, $K$=14, $K_\alpha$=8    & 		97.86								& 	2.871$\times 10^5$		 & 	1.12    \\
$M$=16, $K$=5, $K_\alpha$=8 & 			97.81								& 	1.026$\times 10^5$  		 & 	0.40  \\
$M$=16, $K$=3, $K_\alpha$=8 & 			97.77								& 	6.160$\times 10^4$  		 & 	0.24  \\
$M$=16, $K$=5, $K_\alpha$=5 & 			97.96								& 	6.419$\times 10^4$  		 & 	0.25  \\
$M$=16, $K$=3, $K_\alpha$=5 & 			97.95								& 	3.856$\times 10^4$ 		 & 	0.15  \\
$M$=8, $K$=5, $K_\alpha$=5 & 			97.81								& 	1.610$\times 10^4$ 		 & 	0.06  \\
$M$=8, $K$=3, $K_\alpha$=5 & 			97.59								& 	9.680$\times 10^3$   	 & 	0.04  \\
 \hline
 \end{tabular}
 \end{centering}
 }
 ~
 \parbox{0.45 \linewidth}{
 \begin{centering}
 \begin{tabular}{ l | c   | c c  }
  \hline
  \multicolumn{4}{c}{rotMNIST   Conv-3, $N_{\text{tr}}=5K$ }   \\
\hline
                         &  	Test Acc.	  												&    	\# param.	&  Ratio \\
\hline
CNN  $M$=32			&  	94.04											& 				 	&		\\
DCF   $M$=32, $K$=3	& 	94.08											& 				 	&		\\
\hline
RotDCF  $N_\theta$=8  & 													& 			   		 & 	  \\
$M$=16, $K$=3, $K_\alpha$=5 & 			96.79								& \multicolumn{2}{c}{(same as left)}   	  \\
$M$=8, $K$=3, $K_\alpha$=5 & 			96.53								& 			   		 & 	  \\
\hline
\hline
     \multicolumn{4}{c}{CIFAR10  VGG-16, $N_{\text{tr}}=10K$ }   \\
\hline
\hline
CNN	 $M=64$ 	&  		78.40										&  2.732$\times 10^6$		&	1.00	\\
\hline
RotDCF,  $N_\theta$= 8  & 											& 		   		 & 	  	\\
$M$=32, $K$=3, $K_\alpha$=7 & 			79.44 						&   1.593$\times 10^6$		 &   0.58  	\\
$M$=32, $K$=3, $K_\alpha$=5 & 			79.53						&   1.138$\times 10^6$   	 	&   0.42  	\\      
\hline
\end{tabular}
\end{centering}
}
\caption{
\label{tab:acc-1}
\small
Classification accuracy using regular CNN, DCF and RotDCF Nets on {\bf rotMNIST} and {\bf CIFAR10}. 
``\# param." is number of parameters in all convolutional layers,
and ``Ratio" indicates the proportion to the \# param. of the regular CNN.
Notice that the reduction from non-bases rotation-equivariant CNNs (the fair comparison case) can be even smaller, 
which is the factor of $\frac{KK_\alpha}{L^2 N_\theta}$, c.f. Section \ref{sec:2-3}. 
The higher accuracy is due to the group equivariance and the lower model complexity is due to the bases decomposition.
}
\vskip -0.3in
\end{table}

\subsection{Object Classification}

\begin{table}[h]
\begin{centering}
\small
\begin{tabular}[t]{  l | l }
\hline
~~ Conv-3 CNN-$M$ 						 &   	Conv-3 RotDCF-$M$ ~~          \\
\hline
c5x5x1x$M$ ReLu ap2x2  			&      rc5x5x1x$M$  ReLu  ap2x2\\
c5x5x$M$x$2M$  ReLu  ap2x2 		& 	 rc5x5x$N_\theta$x$M$x$2M$   ReLu  ap2x2\\
c5x5x$2M$x$4M$  ReLu  ap2x2 		& 	 rc5x5x$N_\theta$x$2M$x$4M$  ReLu  ap2x2\\
  fc64 ReLu  fc10 softmax-loss			& 	  fc64 ReLu  fc10 softmax-loss \\
\hline       	
\end{tabular}
\caption{\label{tab:network-arch-1}
\small
Conv-3 network architectures used in rotMNIST, $M=32$, 16 or 8. 
c$L$x$L$x$M'$x$M$ stands for a convolutional layer of patch size $L$x$L$ and input (output) channel $M'$ ($M$).
ap$L$x$L$ stands for $L$x$L$ average-pooling. 
In the RotDCF Net,
rc$L$x$L$x$N_\theta$x$M'$x$M$ stands for a rotation-indexed convolutional layer,
which includes $N_\theta$-times many number of filters except for the 1st rc layer (see Section \ref{sec:2}). 
Batch-normalization layers (not shown) are used during training. 
}
\end{centering}
\vskip -0.1in
\end{table}

\begin{table}[b]
\begin{centering}
\scriptsize
\begin{tabular}{  l | l }
\hline
~~ VGG-16 CNN-$M$ 						 							&   	VGG-16 RotDCF-$M$ ~~          \\
\hline
c3x3x3x$M$ ReLu  c3x3x$M$x$M$ ReLu  c3x3x$M$x$M$ ReLu 		&      rc3x3x3x$M$ ReLu  rc3x3x$N_\theta$x$M$x$M$ ReLu  rc3x3x$N_\theta$x$M$x$M$ ReLu \\
c3x3x$M$x$M$ ReLu  c3x3x$M$x$M$ ReLu   mp2x2  				&      rc3x3x$N_\theta$x$M$x$M$ ReLu  rc3x3x$N_\theta$x$M$x$M$ ReLu   mp2x2\\
c3x3x$M$x$2M$ ReLu  c3x3x$2M$x$2M$ ReLu 				 	&      rc3x3x$N_\theta$x$M$x$2M$ ReLu  rc3x3x$N_\theta$x$2M$x$2M$ ReLu  				\\
c3x3x$2M$x$2M$ ReLu  c3x3x$2M$x$2M$ ReLu   mp2x2  			&      rc3x3x$N_\theta$x$2M$x$2M$ ReLu  rc3x3x$N_\theta$x$2M$x$2M$ ReLu   mp2x2\\
c3x3x$2M$x$4M$ ReLu  c3x3x$4M$x$4M$ ReLu 				 	&      rc3x3x$N_\theta$x$2M$x$4M$ ReLu  rc3x3x$N_\theta$x$4M$x$4M$ ReLu  				\\
c3x3x$4M$x$4M$ ReLu  c3x3x$4M$x$4M$ ReLu   mp2x2  			&      rc3x3x$N_\theta$x$4M$x$4M$ ReLu  rc3x3x$N_\theta$x$4M$x$4M$ ReLu   mp2x2\\
 fc128 ReLu  fc10 softmax-loss									& 	 fc128 ReLu  fc10 softmax-loss \\
\hline       	
\end{tabular}
\caption{\label{tab:network-arch-2}
VGG-16-like network architectures used in CIFAR10, $M=64$ or 32. 
mp$L$x$L$ stands for $L$x$L$ max-pooling,
and other notations similar to Table \ref{tab:network-arch-1}.
}
\end{centering}
\vskip -0.15in
\end{table}

{\bf  Non-transfer learning setting}. 
The
{\bf rotMNIST} 
dataset contains $28 \times 28$ grayscale images of digits from 0 to 9, 
randomly rotated by an angle uniformly distributed from 0 to $2\pi$ \cite{Cohen2016}.
We use 10,000 and 5,000 training samples, and 50,000 testing samples. 
A CNN consisting of 3 convolutional layers (Conv-3, Table \ref{tab:network-arch-1}) 
is trained as a performance baseline,
and the RotDCF counterpart is made by replacing the regular convolutional layers with RotDCF layers, 
with reduced number of (unstructured) channels $M$, and $N_\theta$ many rotation-indexed channels ($N_\theta = 8$). 
$K$ bases are used for $\psi_k$ and $K_\alpha$ for $\varphi_m$.
The classification accuracy is shown in Table \ref{tab:acc-1}, for various choices of $M$, $K$ and $K_\alpha$.  
We see that RotDCF net obtains improved classification accuracy with significantly reduced number of parameters, e.g.,
with 10K training, the smallest RotDCF Net ($M=8$, $K=3$, $K_\alpha=5$)
improves the test accuracy from $95.67$ to $97.59$ with less than $\frac{1}{20}$ many parameters of the CNN model,
and $\frac{1}{3}$ of the DCF model \cite{qiu2018dcfnet}.
The trend continues with reduced training size (5K). 

The {\bf CIFAR10} dataset consists of $32 \times 32$ colored images from 10 object classes \cite{cifar},
and we use 10,000  training and 50,000 testing samples. 
The network architecture is modified from VGG-16 net \cite{simonyan2014very} (Table \ref{tab:network-arch-2}).
As shown in  Table \ref{tab:acc-1}, 
RotDCF Net obtains better testing accuracy with reduced model size from the regular CNN baseline model. 

{\bf Transfer learning setting.}
We train a regular CNN and a RotDCF Net on 10,000 up-right MNIST data samples,
and directly test on 50,000 randomly rotated MNIST samples
where the maximum rotation angle $\text{MaxRot}$=30 or 60 degrees
(the ``no-retrain" case).
We also test after retraining the last two non-convolutional layers (the `` fc-retrain" case).
To visualize the importance of image regions which contribute to the classification accuracy, 
we adopt Class Activation Maps (CAM) \cite{CAM},
and the network is modified accordingly by removing the last pooling layer in the net in Table \ref{tab:network-arch-1}
and inserting a ``gap" global averaging layer.
The test accuracy are listed in Table \ref{tab:acc-transfer-mnist},
where the superiority of RotDCF Net is clearly shown in both the ``no-retrain"  and `` fc-retrain"  cases .
The improved robustness of RotDCF Net is furtherly revealed by the CAM maps (Figure \ref{fig:transfer-mnist-CAM}):
the red-colored region is more stable for RotDCF Net even in the case with retraining.

\begin{figure}[t]
\begin{center}
\includegraphics[width=.495\linewidth]{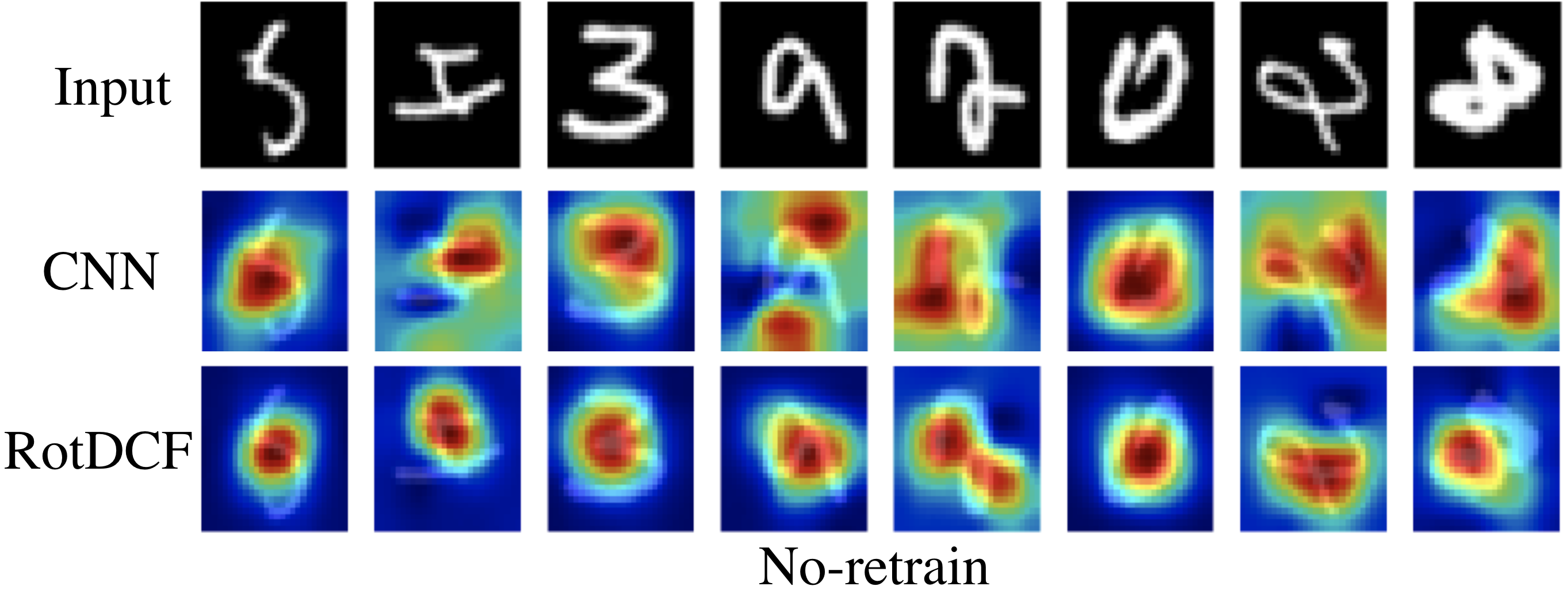} 
\includegraphics[width=.495\linewidth]{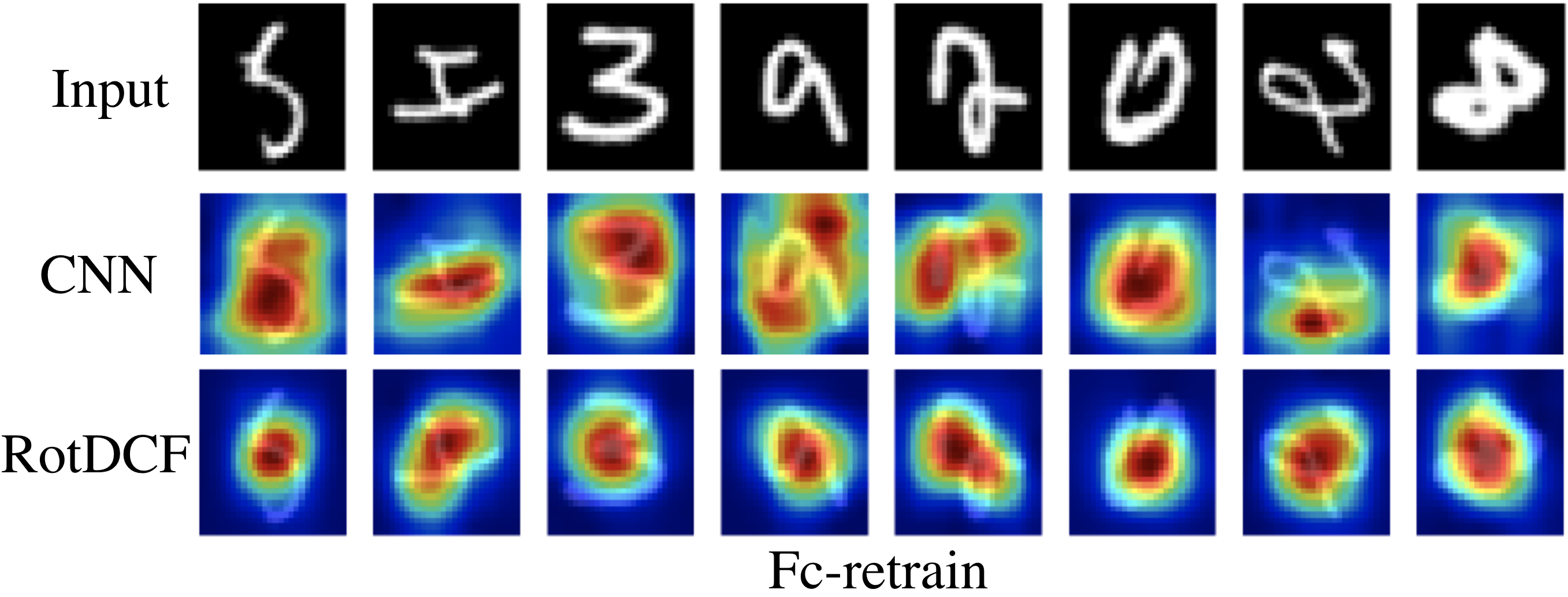} 
\caption{\small 
Representative
class activation maps (CAM) on testing images in the rotMNIST transfer learning experiment.
The heatmap indicates the importance of image regions used in recognizing a digit class. 
The CNN and RotDCF networks are trained on up-right MNIST samples,
with no retrain and retraining the fully connected layers respectively before testing,
and the testing samples are randomly rotated up to 60 degrees. (c.f. Table \ref{tab:acc-transfer-mnist}).
}
\vskip 0.1 in
\label{fig:transfer-mnist-CAM}
\end{center}
\end{figure}

\begin{table}
\scriptsize
\begin{center}
 \parbox{0.45 \linewidth}{
 \centering
\begin{tabular}[t]{  l  |   c c  }
\hline
     \multicolumn{3}{c}{ MNIST to rotMNIST  \text{MaxRot}=30 Degrees}   \\
\hline
                               &  	 no-retrain						        &    fc-retrain  \\
\hline
CNN 	                &   	92.61							& 	94.71			 			\\
RotDCF 			& 	96.90							& 	98.48			 			\\
\hline
\end{tabular}
} ~~
 \parbox{0.45 \linewidth}{
 \centering
\begin{tabular}[t]{  l  |   c c  }
\hline
  \multicolumn{3}{c}{ MNIST to rotMNIST \text{MaxRot}=60 Degrees}   \\
\hline
                               &  	 no-retrain						        &    fc-retrain  \\
\hline
CNN 	                &   	69.61							& 	85.90			 			\\
RotDCF 			& 	82.36							& 	97.68			 			\\
\hline
\end{tabular}
}
\caption{\label{tab:acc-transfer-mnist}
\small
Test accuracy in the rotMNIST transfer learning experiment.
The network is trained on 10K up-right MNIST samples and tested on 
50K randomly rotated samples up to the MaxRot degrees.
}
\end{center}
\vskip -0.05 in
\end{table}

\subsection{Image Reconstruction}

\begin{table}
\vskip -0.05 in
\begin{centering}
\small
\begin{tabular}{  l  }
\hline
~~ RotDCF ConvAE 						 		 \\
\hline
rc5x5x1x8 ReLu  ap2x2  							\\
rc5x5x$N_\theta$x8x16  ReLu   ap2x2				\\
rc5x5x$N_\theta$x16x32 ReLu  ap2x2			 	\\
rc5x5x$N_\theta$x32x32  ReLu  ~~~~~~~~~ $\leftarrow$ {\bf Encoded representation}		\\
 fc128 ReLu
ct5x5x128x16$N_\theta$ ReLu 					\\
ct5x5x16$N_\theta$x8$N_\theta$ (upsample 2x2) ReLu 	  			\\
ct5x5x8$N_\theta$x1 (upsample 2x2)   Eucledian-loss 				\\		
\hline       	
\end{tabular}
\caption{\label{tab:network-arch-3}
Convolutional Auto-encoder network used in the image reconstruction experiment. 
RotDCF layers are used in the encoder network, with $N_\theta =16$, $K=5$, $K_\alpha= 5$ ($K=8$, $K_\alpha= 15$ in the last RotDCF layer),
and transposed-convolutional layers (denoted by ``ct") with upsampling  are used in the decoder net. 
ap$L$x$L$ stands for $L$x$L$ average-pooling,
and other notations similar to Table \ref{tab:network-arch-1}.
}
\end{centering}
\vskip -0.2in
\end{table}

\begin{figure}
\begin{center}
 \includegraphics[width=1\linewidth]{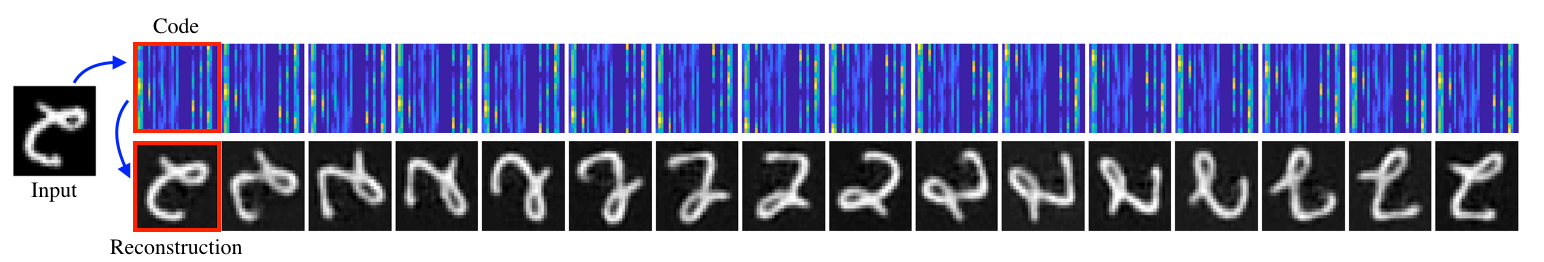} 
\caption{
\small 
Codes and reconstructions of rotMNIST digits. 
(Top) A test image is encoded into a $16 \times 32$ array in the red box (the intermediate representation), 
and the code generates $16$ copies by circulating the rows.
(Bottom) 
Images reconstructed from the row-circulated codes above by the decoder.
}
\label{fig:ae1}
\end{center}
\vskip -0.1 in
\end{figure}

To illustrate the explicit encoding of group actions in the RotDCF Net features, 
we train a convolutional auto-encoder on the rotated MNIST dataset,  
where encoder consists of stacked RotDCF layers, 
and the decoder consists of stacked transposed-convolutional layers (Table \ref{tab:network-arch-3}).
The encoder maps a 28$\times$28 image into an array of $16 \times 32$, 
where the first dimension is the discretization of the rotation angles in $[0,2\pi]$,
and the second dimension is the unstructured channels. 
Due to the rotation equivariant relation, 
the ``circulation'' of the rows of the code array should correspond to the rotation of the image.
This is verified in Figure \ref{fig:ae1}:
The top panel shows the code array produced from a testing image, and the 16 row-circulated copies of it.
The bottom panel shows the output of the decoder fed with the codes in the top panel.

\subsection{Face Recognition}\label{sec:4.3}
  
  \begin{table}[b]
\begin{centering}
\scriptsize
\begin{tabular}{  l | l }
\hline
~~CNN   & ~~ RotDCF 	  \\
\hline
c5x5x3x32 ReLu mp2x2  							& rc5x5x3x16 ReLu mp2x2\\
c5x5x32x64  ReLu  mp2x2 						& rc5x5x$N_\theta$x16x32  ReLu  mp2x2 \\
c5x5x64x128  ReLu  c5x5x128x128  ReLu mp2x2 		& rc5x5x$N_\theta$x32x64  ReLu  c5x5x64x64  ReLu mp2x2 \\
c5x5x128x256  ReLu  c5x5x256x256  ReLu mp2x2 		& rc5x5x$N_\theta$x64x128  ReLu  c5x5x128x128  ReLu mp2x2 \\
c5x5x256x256  ReLu  c5x5x256x256  ReLu gap13x13 	& rc5x5x$N_\theta$x128x128  ReLu  c5x5x128x128  ReLu gap13x13 \\
fc softmax 									& fc softmax\\
\hline       	
\end{tabular}
\caption{\label{tab:facenet}
Network architectures used in face experiments,
notations as in Table  \ref{tab:network-arch-1}. 
In the RotDCF Net, $N_{\theta} =8$, $K=5$, $K_{\alpha} = 5$. 
}
\end{centering}
\end{table}

As a real-world example, we test RotDCF on  the Facescrub dataset \cite{ng2014data} containing over 100,000 face images of 530 people.
A CNN and a RotDCF Net (Table \ref{tab:facenet})
are trained respectively using the gallery images from the 500 known subjects,
which are preprocessed to be near-frontal and upright-only by aligning facial landmarks \cite{landmark}.
 See Appendix \ref{sec:A3-face} for data preparation and training details. 
For the trained deep networks, we remove the last \emph{softmax} layer, and then use the network outputs as deep features for faces, 
which is the typical way of using deep models for face verification and recognition to support both seen and unseen subjects \cite{vgg-face}.
Using deep features generated by the trained networks,
a probe image is then compared with the gallery faces whose identities assume known, 
and classified as the identity label of the top match.

Under this gallery-probe face recognition setup, we obtain 94.10\% and 96.92\% accuracy for known and unknown subjects respectively
using the CNN model; using RotDCF, the accuracies are  93.42\% and 96.92\%. 
Testing on unknown subjects are critical for validating the model representation power over unseen identities,
and the reason for higher accuracy is simply due to the smaller number of classes. 
For both cases, RotDCF reports comparable performance as CNN,
while the number of parameters in the RotDCF model is about one-fourth of the CNN model (see 
Appendix \ref{sec:A3-face}).
 
\textbf{In-plane rotation.} 
This experiment demonstrates the rotation-equivariance of the RotDCF features. 
We apply in-plane rotations at intervals of $\frac{\pi}{4}$ to the probe images (Figure~\ref{fig:grot}), 
and let the original probe set be the new gallery, the rotated copies be the new probe set. 
In this setting, using the RotDCF model we obtain  97.04\% and 97.58\% recognition accuracy for known and unknown subjects respectively, 
after aligning the deep features by circular shifts (using the largest-magnitude $\alpha$ channel as reference).
Notice that the model only sees upright faces.
This is due to the rotation-equivariant property of the RotDCF Net,
which means that the face representation is consistent regardless of its orientations after the group alignment.
Lacking such properties, CNN obtains 0.54\% and 5.05\% recognition accuracies, which is close to random guess.
We further compare CNN and RotDCF models via the CAM maps. 
As shown in Figure~\ref{fig:grot}, 
RotDCF is able to choose more consistent regions in describing a subject in different rotated copies, 
while CNN tends to pick different regions in defining a subject.

  \begin{figure}[t]
  \centering
{\includegraphics[angle=0, width=.492\textwidth]{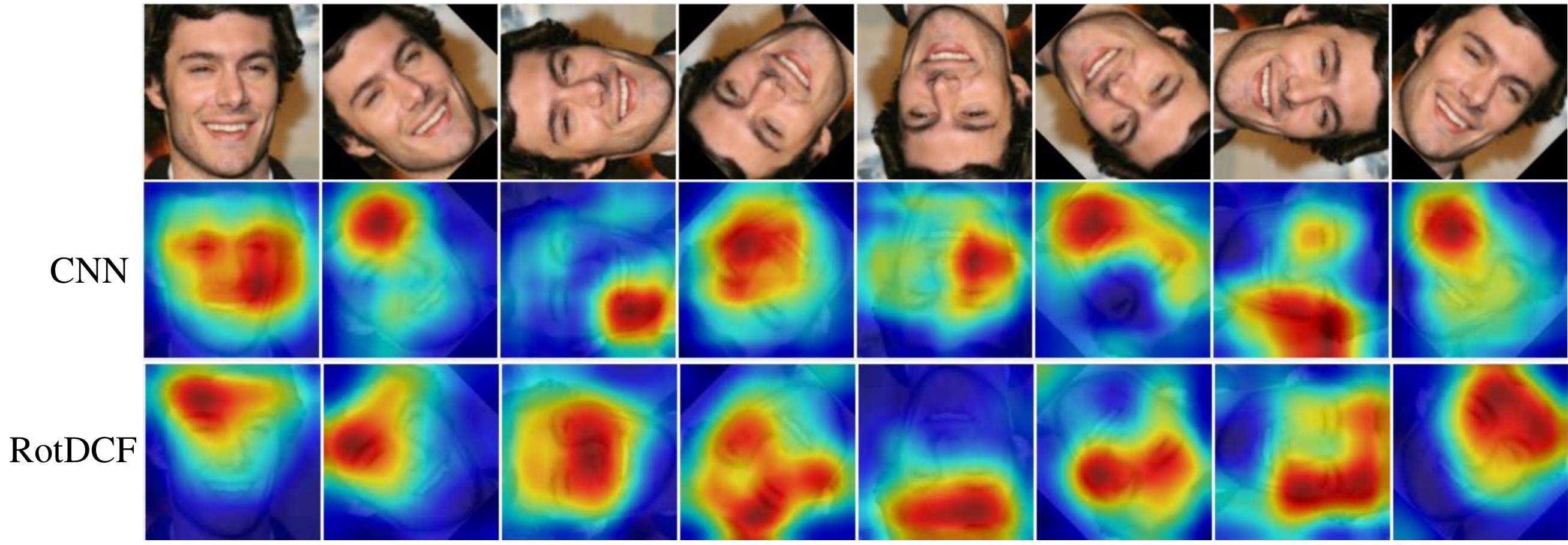} }
{\includegraphics[angle=0, width=.492\textwidth]{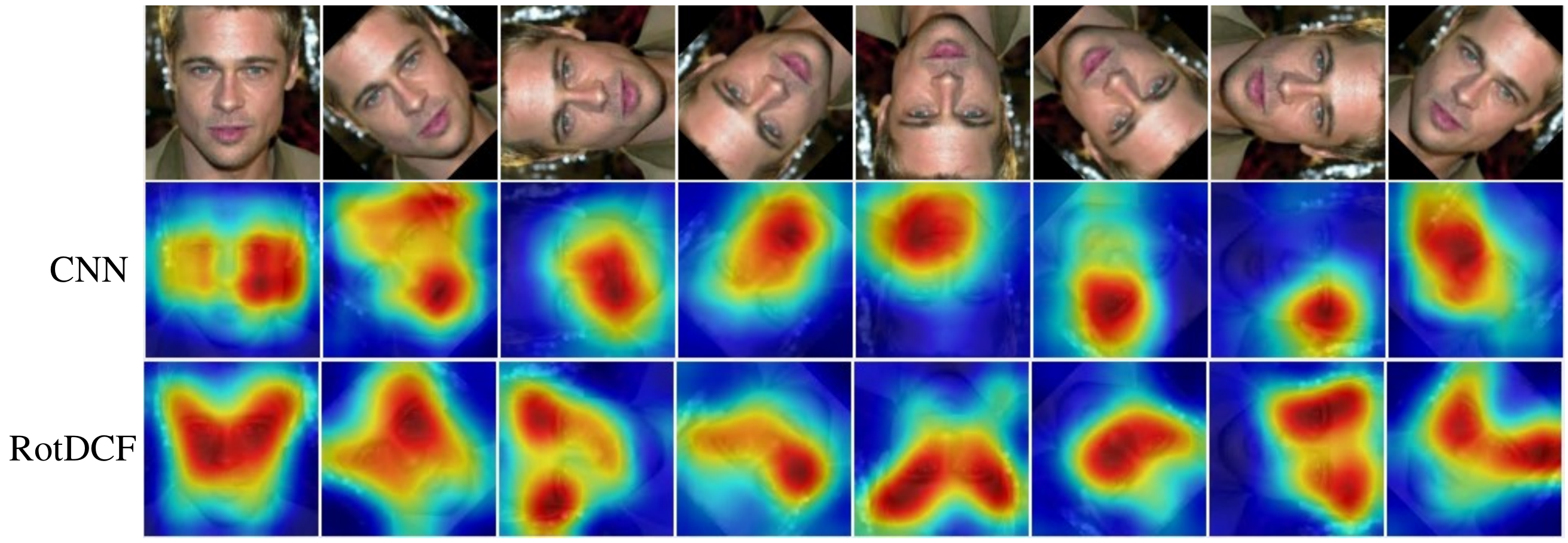} }
  \caption{
  \small
  Example CAM maps for recognizing faces with in-plane rotations.  
The heatmap indicates the importance of different image regions used by respective models in defining a face. 
  Across different in-plane rotated copies,
  RotDCF chooses significantly more consistent discriminative regions than CNN,
  indicating more stable representations. 
   In this experiment, we obtain 0.54\% recognition accuracy using CNN (nearly random guess), and  97.04\% accuracy using RotDCF with feature alignment, on known subjects.
  }
  \label{fig:grot}
  \end{figure}

 \textbf{Out-of-plane rotation.}
 To validate our theoretical result on representation stability under input deformations, we introduce out-of-plane rotations to the probe.
Each probe image is fitted to a 3D face mesh, and rotated copies are rendered at the 10$^o$ intervals with -40$^o$ to 40$^o$ yaw, and  -20$^o$ to 20$^o$ pitch,
generating 45 synthesized faces in total (Figure~\ref{fig:lrotface}). 
The synthesis faces at two poses (highlighted in red) are used as the new gallery, and all remaining synthesis faces form the new probe.
The out-of-plane rotations here can be viewed as mild in-plane rotations plus additional variations,
a situation frequently encountered in the real world.
 With this gallery-probe setup, the RotDCF model  obtains  89.66\% and 97.01\% recognition accuracy for known and unknown subjects,
 and the accuracies are   80.79\% and 89.97\% with CNN. 
The CAM plots in Figure ~\ref{fig:lrot} also indicate that 
RotDCF Net chooses more consistent regions over CNN in describing a subject across different poses.
Since the out-of-plane rotations as in Figure~\ref{fig:lrotface} can be considered as in-plane rotations with additional variations, 
the superior performance of RotDCF is consistent with the theory in Section~\ref{sec:3}.

\section{Conclusion and Discussion}\label{sec:5}

This work introduces a decomposition of the filters in rotation-equivariant CNNs 
under joint steerable bases over space and rotations simultaneously,
obtaining equivariant deep representations with significantly reduced model size and an implicit filter regularization.
The group equivariant property and representation stability are proved theoretically.
In experiments, RotDCF demonstrates improved recognition accuracy, particularly in the transfer learning setting,
as well as better feature interpretability and stability over regular CNNs on synthetic and real-world datasets involving object rotations.
It is important to build deep networks which encode group actions and at the same time are resilient to input variations,
and RotDCF provides a general approach to achieve these two objectives.

 \begin{figure} [t]
   \centering
   {\includegraphics[angle=0, width=.98\textwidth]{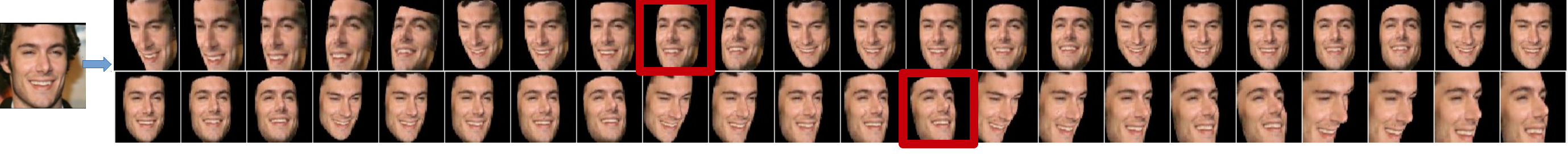} \hspace{0pt}}
   \caption{
    \small
   Synthesized faces from a testing image with -40$^o$ to 40$^o$ yaw, and -20$^o$ to 20$^o$ pitch, at a 10$^o$ interval.  
   }
   \label{fig:lrotface}
   \end{figure}

 \begin{figure} [t]
 \centering
\includegraphics[angle=0, width=.495\textwidth]{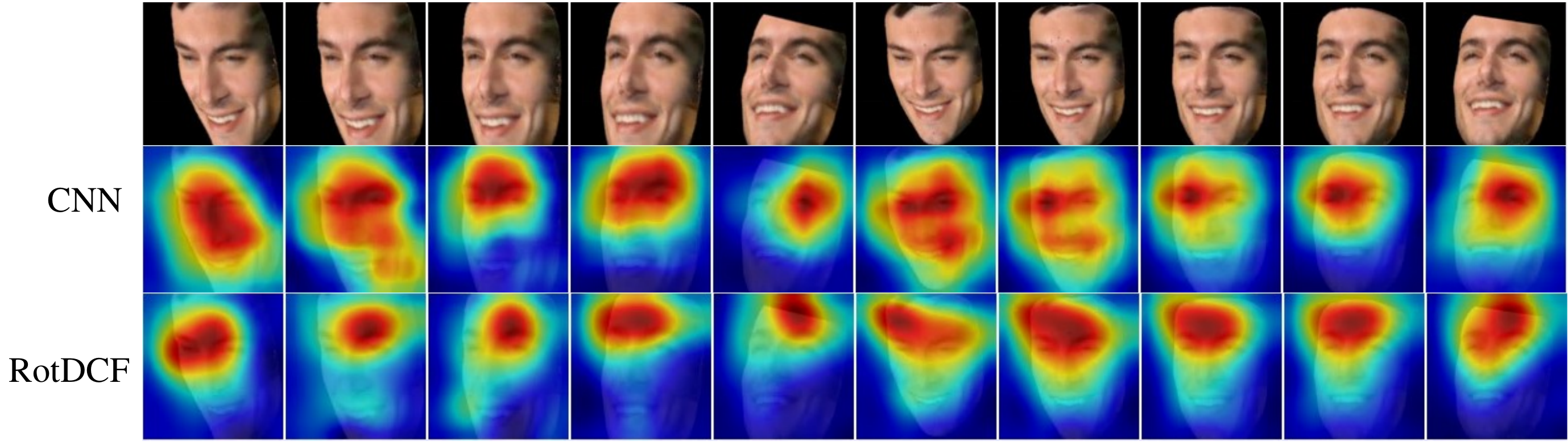} 
\label{fig:lrot2} \includegraphics[angle=0, width=.495\textwidth]{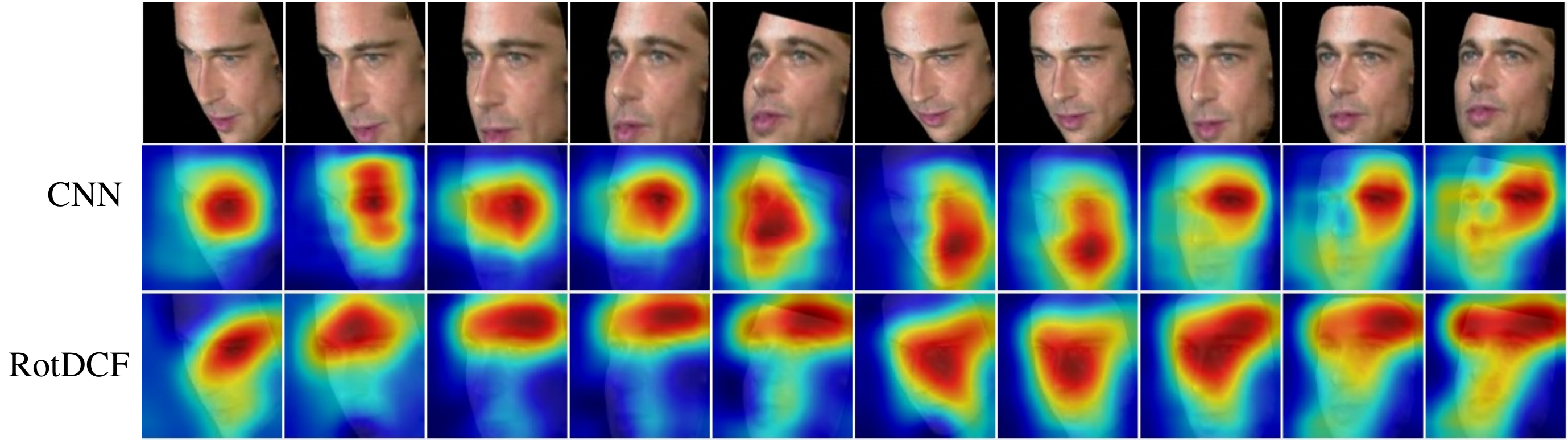} 
 \caption{
 \small
   Example CAM maps for recognizing faces with out-of-plane rotations.
    Across out-of-plane rotated copies, 
    the discriminative regions chosen by RotDCF in describing a subject are more consistent,
    showing better representation stability than CNN.
    In this experiment, we obtain 80.79\% recognition accuracy using CNN, and  89.66\% using RotDCF, on known subjects. 
 }
 \label{fig:lrot}
 \end{figure}

To extend the work, 
implementation issues  like parallelism efficiency should be considered 
before the theoretical flop savings can be achieved. 
The memory need in the current RotDCF Net is the same as without using bases, 
because the output in each layer is computed in the real space to apply the ReLU.
It would be appealing to completely avoid the real-space representation 
so as to save memory as well as to further reduce the computation.
Finally, the proposed framework should extend to other groups due to the generality of the approach.
The application domain will govern the choice of bases $\psi_k$ and $\varphi_m$,
e.g., for $SO(3)$ the spherical harmonics would be a natural choice for $\varphi_m$.



%
%
%
%


\appendix

\setcounter{table}{0}
\renewcommand{\thetable}{A.\arabic{table}}

\section{Proofs in Section 3}

\begin{proof}[Proof of Theorem \ref{thm:equiv}]
Since the bases expansion under $\psi_{j_l,k}$ and $\varphi_m$ does not affect the form of convolutional layers, but only impose regularity of the filters, 
it suffices proving the statement without expanding the filters under the bases.

Observe that \eqref{eq:so2-equiv} is equivalent to that 
\begin{equation}
\label{eq:so2-equiv-2}
T_\rho x^{(l)}[x^{(l-1)}] = x^{(l)}[ T_\rho  x^{(l-1)}]
\end{equation}
for all $l$, where $T_\rho x^{(0)}$ means $D_\rho x^{(0)}$. 

The sufficiency part:
When $l=1$, by \eqref{eq:joint-conv-1}, 
\[
\begin{split}
T_\rho x^{(1)}[x^{(0)}](u, \alpha, \lambda)  
& =  \sigma( 
\sum_{\lambda'} \int_{\R^2} x^{(0)}( \rho_{u_0,t} u+v,\lambda') W^{(1)}_{\lambda', \lambda}(\Theta_{\alpha -t} v) dv + b^{(1)}(\lambda)),   \\
x^{(1)}[ D_\rho  x^{(0)}](u, \alpha, \lambda)
& =  \sigma( 
\sum_{\lambda'} \int_{\R^2} x^{(0)}( \rho_{u_0,t}(u+v),\lambda') W^{(1)}_{\lambda', \lambda}(\Theta_{\alpha} v)  dv + b^{(1)}(\lambda)).
\end{split}
\]
Since $\rho_{u_0,t}(u+v) = \rho_{u_0,t}+ \Theta_t v$, we have that $T_\rho x^{(1)}[x^{(0)}] = x^{(1)}[ D_\rho x^{(0)}]$ by a change of variable of $\Theta_t v \mapsto v$. 

When $l > 1$, by \eqref{eq:joint-conv-2}, 
\[
\begin{split}
T_\rho x^{(l)}[x^{(l-1)}](u, \alpha, \lambda)  
& =  \sigma( 
\sum_{\lambda'} \int_{\R^2} \int_{S^1} x^{(l-1)}( \rho_{u_0,t} u+v, \alpha', \lambda') W^{(l)}_{\lambda', \lambda}(\Theta_{\alpha -t} v, \alpha'-\alpha+t) dv d\alpha' + b^{(l)}(\lambda)),   \\
x^{(l)}[ T_\rho  x^{(l-1)}](u, \alpha, \lambda)
& =  \sigma( 
\sum_{\lambda'} \int_{\R^2}  \int_{S^1} x^{(l-1)}( \rho_{u_0,t}(u+v),\alpha'-t, \lambda') W^{(l)}_{\lambda', \lambda}(\Theta_{\alpha} v, \alpha'-\alpha)  dv d\alpha' + b^{(l)}(\lambda)).
\end{split}
\]
Again, inserting $\rho_{u_0,t}(u+v) = \rho_{u_0,t}+ \Theta_t v$, 
the claim follows by changing variables $\Theta_t v \mapsto v$ and $\alpha' -t \mapsto \alpha'$.

The necessity part: 
When $l=1$, 
denote the general convolutional filter as $w^{(1)}(v; \lambda', \lambda, \alpha)$, and 
\[
x^{(1)}(u, \alpha, \lambda) 
= \sigma( 
\sum_{\lambda'} \int_{\R^2} x^{(0)}(u+v,\lambda') w^{(1)}(v; \lambda', \lambda, \alpha) dv + b^{(1)}(\lambda)
).
\]
Recall that 
\[
\begin{split}
T_\rho x^{(1)}[x^{(0)}](u, \alpha, \lambda)  
& =  \sigma( 
\sum_{\lambda'} \int_{\R^2} x^{(0)}( \rho_{u_0,t} u+v,\lambda') w^{(1)}(v; \lambda', \lambda, \alpha-t) dv + b^{(1)}(\lambda)),   \\
x^{(1)}[ D_\rho  x^{(0)}](u, \alpha, \lambda)
& =  \sigma( 
\sum_{\lambda'} \int_{\R^2} x^{(0)}( \rho_{u_0,t}(u+v),\lambda') w^{(1)}(v; \lambda', \lambda, \alpha) dv + b^{(1)}(\lambda)) 
\end{split}
\]
and then 
\eqref{eq:so2-equiv-2} with $l=1$ holding
for any $x^{(0)}$ implies that 
\[
\sum_{\lambda'} \int_{\R^2} x^{(0)}( \rho_{u_0,t} u+v,\lambda') w^{(1)}(v; \lambda', \lambda, \alpha-t) dv 
= 
\sum_{\lambda'} \int_{\R^2} x^{(0)}( \rho_{u_0,t}(u+v),\lambda') w^{(1)}(v; \lambda', \lambda, \alpha) dv.
\]
By that $\rho_{u_0,t}(u+v) = \rho_{u_0,t}+ \Theta_t v$, the above equality gives that 
\[
w^{(1)}(v; \lambda', \lambda, \alpha-t) = w^{(1)}(\Theta_t^{-1} v; \lambda', \lambda, \alpha), 
\quad \forall \alpha, t \in S^1.
\]
Let $t=\alpha$, and $F_{\lambda', \lambda}(v) = w^{(1)}(v; \lambda', \lambda, 0)$,
we have that \[
w^{(1)}(\Theta_\alpha^{-1} v; \lambda', \lambda, \alpha) = F_{\lambda', \lambda}(v),
\]
and this gives that $ w^{(1)}(v; \lambda', \lambda, \alpha)= F_{\lambda', \lambda}( \Theta_\alpha v) $.
This proves \eqref{eq:joint-conv-1}. 

When $l>1$, consider the general convolutional filter as $w^{(l)}(v; \lambda', \lambda, \alpha', \alpha)$.
Using a similar argument, \eqref{eq:so2-equiv-2} implies that 
\[
w^{(l)}(v; \lambda', \lambda, \alpha', \alpha-t) = w^{(l)}(\Theta_t^{-1} v; \lambda', \lambda, \alpha' + t, \alpha),
\quad \forall \alpha, t \in S^1.
\]
Let $t=\alpha$, and $F_{ \lambda', \lambda}(v,\alpha') = w^{(l)}(v; \lambda', \lambda, \alpha', 0)$, 
then
\[
F_{ \lambda', \lambda}(v,\alpha') = w^{(l)}(\Theta_\alpha^{-1} v; \lambda', \lambda, \alpha' + \alpha, \alpha),
\]
which gives that $w^{(l)}( v; \lambda', \lambda, \alpha', \alpha)  = F_{ \lambda', \lambda}( \Theta_\alpha v,\alpha'-\alpha) $,
which proves \eqref{eq:joint-conv-2}. 
\end{proof}

\begin{proposition}\label{prop:Bl-Cl-Dl}
For all $l$,
\[
B^{(l)}_{\lambda', \lambda} ,\, C ^{(l)}_{\lambda', \lambda}, 2^{j_l} D^{(l)}_{\lambda', \lambda}   
\le \pi \|a^{(l)}_{\lambda', \lambda}\|_{\text{FB}},
\]
where
\begin{equation}\label{eq:def-B-C-D-1}
\begin{split}
B^{(l)}_{\lambda', \lambda} 
& := \int_{\R^2} \int_{S^1}|W^{(l)}_{\lambda', \lambda} (v,\beta)|dv d\beta, \, l>1, 
\quad
B^{(1)}_{\lambda', \lambda}  := \int_{\R^2} |W^{(1)}_{\lambda', \lambda} (v)|dv \\
C^{(l)}_{\lambda', \lambda} 
& := \int_{\R^2} \int_{S^1} |v| |\nabla_v W^{(l)}_{\lambda', \lambda} (v,\beta)|dv d\beta, \, l>1, 
\quad
C^{(1)}_{\lambda', \lambda}  := \int_{\R^2} |v| |\nabla_v W^{(1)}_{\lambda', \lambda} (v)|dv \\
D^{(l)}_{\lambda', \lambda} 
& := \int_{\R^2} \int_{S^1}  |\nabla_v W^{(l)}_{\lambda', \lambda} (v,\beta)|dv d\beta, \, l>1, 
\quad
D^{(1)}_{\lambda', \lambda}  := \int_{\R^2} |\nabla_v W^{(1)}_{\lambda', \lambda} (v)|dv 
\end{split}
\end{equation}
As a result, 
\[
 B_l, \, C_l, \, 2^{j_l} D_l  
 \le A_l,
 \]
 where
\begin{equation}
\begin{split}
B_l : = 
&  \max \{ 
	 \sup_{\lambda} \sum_{\lambda'=1}^{M_{l-1}} B^{(l)}_{\lambda', \lambda}, \, 
	\sup_{\lambda'}  \frac{M_{l-1}}{M_l}  \sum_{\lambda=1}^{M_{l}} B^{(l)}_{\lambda', \lambda}  \}, \\
C_l : = 
&  \max \{ 
	 \sup_{\lambda} \sum_{\lambda'=1}^{M_{l-1}} C^{(l)}_{\lambda', \lambda} , \, 
	\sup_{\lambda'}  \frac{M_{l-1}}{M_l}  \sum_{\lambda=1}^{M_{l}}  C^{(l)}_{\lambda', \lambda}  \}, \\
D_l : = 
&  \max \{ 
	 \sup_{\lambda} \sum_{\lambda'=1}^{M_{l-1}} D^{(l)}_{\lambda', \lambda} , \, 
	\sup_{\lambda'}  \frac{M_{l-1}}{M_l}  \sum_{\lambda=1}^{M_{l}} D^{(l)}_{\lambda', \lambda}  \}, 
\end{split}	
\label{eq:def-Bl-Cl-Dl}
\end{equation}
and thus (A2) implies that $ B_l, \, C_l, \, 2^{j_l} D_l   \le 1$ for all $l$. 
\end{proposition}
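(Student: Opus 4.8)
The plan is to reduce the proposition to a single per-filter estimate: for every $l$ and every pair $\lambda', \lambda$,
\[
B^{(l)}_{\lambda', \lambda},\ C^{(l)}_{\lambda', \lambda},\ 2^{j_l} D^{(l)}_{\lambda', \lambda} \ \le\ \pi\, \| a^{(l)}_{\lambda', \lambda} \|_{\text{FB}}.
\]
Granting this, the bounds $B_l, C_l, 2^{j_l} D_l \le A_l$ follow immediately by taking the two suprema $\sup_{\lambda}\sum_{\lambda'}(\cdot)$ and $\sup_{\lambda'}\frac{M_{l-1}}{M_l}\sum_{\lambda}(\cdot)$ of both sides and comparing the definitions \eqref{eq:def-Bl-Cl-Dl} and \eqref{eq:def-Al} (note $j_l$ does not depend on $\lambda',\lambda$, so the factor $2^{j_l}$ passes freely through the sums and suprema), and then (A2), i.e. $A_l\le 1$, gives $B_l, C_l, 2^{j_l}D_l \le 1$. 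So all the content is in the per-filter estimate, which I would prove for $l>1$; the $l=1$ case is identical with the $\varphi_m$ factor suppressed.

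The per-filter estimate rests on three elementary facts about the bases. First, since $\{\varphi_m\}$ is orthonormal in $L^2(S^1,\tfrac1{2\pi}d\alpha)$, Parseval and Cauchy--Schwarz on the probability space $S^1$ give, for each fixed $v$, $\int_{S^1} |W^{(l)}_{\lambda',\lambda}(v,\beta)|\,d\beta \le \big(\sum_m b_m(v)^2\big)^{1/2}$ where $b_m(v) = \sum_k a^{(l)}_{\lambda',\lambda}(k,m)\,\psi_{j_l,k}(v)$, and likewise with $\nabla_v$ replacing the scalar (applying Parseval componentwise). Second, the rescaling $\psi_{j,k} = 2^{-2j}\psi_k(2^{-j}\cdot)$ together with $L^2(D)$-orthonormality of $\{\psi_k\}$ gives $\int_{\R^2}\psi_{j_l,k}\psi_{j_l,k'}\,dv = 2^{-2j_l}\delta_{kk'}$, and integration by parts using the Dirichlet boundary condition $\psi_k|_{\partial D}=0$ and $-\Delta\psi_k = \mu_k\psi_k$ gives $\int_{\R^2}\nabla\psi_{j_l,k}\cdot\nabla\psi_{j_l,k'}\,dv = 2^{-4j_l}\mu_k\delta_{kk'}$. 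Third, $\psi_{j_l,k}$ is supported on the disk of radius $2^{j_l}$, whose area is $\pi 2^{2j_l}$.

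With these in hand I would first handle $D^{(l)}_{\lambda',\lambda}$: integrate the Parseval bound over $v$, apply Cauchy--Schwarz on the supporting disk, and invoke the gradient orthogonality relation, obtaining
\[
D^{(l)}_{\lambda',\lambda} \ \le\ (\pi 2^{2j_l})^{1/2}\Big(\textstyle\sum_{k,m}\mu_k\, a^{(l)}_{\lambda',\lambda}(k,m)^2 \cdot 2^{-4j_l}\Big)^{1/2} = \sqrt{\pi}\, 2^{-j_l}\,\| a^{(l)}_{\lambda',\lambda}\|_{\text{FB}},
\]
so $2^{j_l}D^{(l)}_{\lambda',\lambda}\le\sqrt{\pi}\,\|a^{(l)}_{\lambda',\lambda}\|_{\text{FB}}\le\pi\,\|a^{(l)}_{\lambda',\lambda}\|_{\text{FB}}$. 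Then $C^{(l)}_{\lambda',\lambda}\le 2^{j_l}D^{(l)}_{\lambda',\lambda}$ since $|v|\le 2^{j_l}$ on the support, and $B^{(l)}_{\lambda',\lambda}$ is the same computation with $\psi_{j_l,k}$ in place of $\nabla\psi_{j_l,k}$, giving $B^{(l)}_{\lambda',\lambda}\le\sqrt{\pi}\big(\sum_{k,m}a^{(l)}_{\lambda',\lambda}(k,m)^2\big)^{1/2}\le\sqrt{\pi}\,\|a^{(l)}_{\lambda',\lambda}\|_{\text{FB}}$, where the last inequality uses $\mu_k\ge\mu_1>1$ (the smallest Dirichlet eigenvalue of the unit disk is $\approx 5.78$).

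The main point to watch — more bookkeeping than a genuine obstacle — is keeping the powers of $2^{j_l}$ consistent across the three facts (scale-invariance of the $L^1$ mass of $\psi_{j_l,k}$, the factor $2^{-2j_l}$ in the $L^2$ Gram matrix, the factor $2^{-4j_l}$ in the Dirichlet-energy Gram matrix) and matching them against the area factor $\pi 2^{2j_l}$ from Cauchy--Schwarz, so that the scale cancels in $B$ and leaves exactly $2^{-j_l}$ in $D$. The only analytic input is the identity $\int_D|\nabla\psi_k|^2 = \mu_k$, whose boundary term vanishes precisely because the $\psi_k$ are Dirichlet eigenfunctions; this is exactly why the weighted norm $\|\cdot\|_{\text{FB}}$ with weights $\mu_k$ is the natural quantity controlling these integrals.
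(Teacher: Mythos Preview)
Your proposal is correct and follows essentially the same route as the paper: reduce to a per-filter bound, pass from $L^1$ to $L^2$ by Cauchy--Schwarz over the supporting disk (picking up the area factor), and then exploit orthogonality of the $\psi_k$ and $\varphi_m$ together with the Dirichlet identity $\int_D \nabla\psi_k\cdot\nabla\psi_{k'} = \mu_k\delta_{kk'}$.

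The only minor deviation is in how you bound $B^{(l)}_{\lambda',\lambda}$. The paper first invokes an $L^1$ Poincar\'e-type step $\|F\|_1 \le \|\nabla F\|_1$ on the unit disk and then bounds $\|\nabla F\|_1$ exactly as for $D$, whereas you bound $\|F\|_1 \le \sqrt{\pi}\,\|F\|_2$ directly and then use $\mu_k \ge \mu_1 > 1$ to pass from the unweighted $\ell^2$ norm of the coefficients to $\|a\|_{\text{FB}}$. Your variant is a bit more transparent (it pins the argument to the explicit numerical fact $\mu_1\approx 5.78>1$ rather than an implicit Poincar\'e inequality), but both reach the same conclusion with the same constant. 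Likewise, your two-step Parseval-in-$\beta$ then Cauchy--Schwarz-in-$v$ is equivalent to the paper's single joint Cauchy--Schwarz over $\R^2\times S^1$, since the normalized measure on $S^1$ contributes no extra factor.
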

\begin{proof}[Proof of Proposition \ref{prop:Bl-Cl-Dl}]
The proof for the case of $l=1$ is the same as Lemma 3.5 and Proposition 3.6 of \cite{qiu2018dcfnet}.
We reproduce it for completeness.
When $l=1$, it suffices to show that for $F(v) = \sum_k a_k \psi_k(v)$, 
\begin{equation}\label{eq:proof-prop-Bl-Cl-Dl-1}
\int |F(v)| dv, \,
\int |v| |\nabla F(v)| dv, \,
\int |\nabla F(v)| dv \le  \pi (\sum_k \mu_k a_k^2)^{1/2}.
\end{equation}
Rescaling to $\psi_{j_l, k}$ in $v$ leads to the desired inequality with the factor of $2^{j_l}$ for $D^{(l)}_{\lambda',\lambda}$. 
To prove \eqref{eq:proof-prop-Bl-Cl-Dl-1}, observe that $F$ is supported on the unit disk, and then
$\|F\|_1, \, \int |v| |\nabla F(v)| dv \le \| \nabla F \|_1 \le \sqrt{\pi} \|\nabla F\|_2$,
where $ \|\nabla F\|_2^2 = \pi \sum_k \mu_k a_k^2$ due to the orthogonality of $\psi_k$.  

For $l>1$, similarly, we only consider the rescaled filters supported on the unit disk in $v$. 
Let $F(v,\beta) = \sum_{k,m} a_{k,m} \psi_k(v) \varphi_m(\beta)$, $\beta \in S^1$,
similarly as above, we have that 
\[
 \int \int |F(v,\beta) | dv d\beta, \,  \int \int |v| |\nabla_v F(v,\beta) | dv d\beta 
\le \int \int |\nabla_v F(v,\beta) | dv d\beta
\le (\pi \int \int |\nabla_v F(v,\beta) |^2 dv d\beta)^{1/2}
\]
recalling that $\int d\beta$ on $S^1$ has the normalization of $\frac{1}{2\pi}$.
Again, $ \int \int |\nabla_v F(v,\beta) |^2 dv d\beta = \pi \sum_{k,m} \mu_k a_{k,m}^2$ due to the orthogonality of $\psi_k$ and $\varphi_m$.
This proves that \[
\int \int |\nabla_v F(v,\beta) | dv d\beta \le \pi (\sum_{k,m} \mu_k a_{k,m}^2)^{1/2},
\]
which leads to the claim after a rescaling of $v$. 
\end{proof}

\begin{remark}[Remark to Proposition \ref{prop:l2stable}]
The proposition only needs $B_l $, defined in \eqref{eq:def-Bl-Cl-Dl}, to be less than 1 for all $l$, in a rotation-equivariant CNN,
which is implied by (A2) by Proposition \ref{prop:Bl-Cl-Dl}.
\end{remark}

\begin{proof}[Proof of Proposition \ref{prop:l2stable}]
The proof is similar to that of Proposition 3.1(a) of \cite{qiu2018dcfnet}. 
Specifically, in (a), the argument is the same for $l=1$, making use of the fact that \[
\int |w(\Theta_\alpha v )|dv = \int |w( v )|dv, \quad \forall \alpha \in S^1
\]
and $\int_{S^1} d\alpha =1 $ due to the normalization of $\frac{1}{2\pi}$.
For $l>1$, the same technique proceeds with the new definition of $B^{(l)}_{\lambda', \lambda}$ as in \eqref{eq:def-B-C-D-1}
which involves the integration of $\int_{S^1} (\cdots)  d\beta$. The detail is omitted.

To prove (b),  we firstly verify that $x_0^{(l)}$ only depends on $\lambda$. When $l=1$, 
$x^{(1)}_0(u,\alpha,\lambda) = \sigma( b^{(1)} (\lambda) )$. 
Suppose that it holds for $(l-1)$, consider $l>1$, 
\[
\begin{split}
x^{(l)}_0( u, \alpha, \lambda ) 
& = 
\sigma(  \sum_{\lambda'} \int_{S^1} \int_{\R^2} x_0^{(l-1)}(u+v, \alpha +\beta, \lambda')  W^{(l)}_{\lambda',\lambda}(\Theta_\alpha v, \beta) dv d\beta 
+ b^{(l)}(\lambda) ) \\
& =
\sigma(  \sum_{\lambda'} x_0^{(l-1)}(\lambda')  \int_{S^1} \int_{\R^2}  W^{(l)}_{\lambda',\lambda}(\Theta_\alpha v, \beta) dv d\beta 
+ b^{(l)}(\lambda) ) \\
& =
\sigma(  \sum_{\lambda'} x_0^{(l-1)}(\lambda') \cdot  \int_{S^1} \int_{\R^2}  W^{(l)}_{\lambda',\lambda}(v', \beta) dv' d\beta 
+ b^{(l)}(\lambda) ) \\
& = x^{(l)}_0(  \lambda ).
\end{split}
\]
Thus $x_0^{(l)}(u,\alpha,\lambda) = x_0^{(l)}(\lambda)$ for all $l$ (without index $\alpha$ for $l=1$).
The rest of the argument follows from that 
$\| x^{(l)}_c \| = \|x^{(l)} - x^{(l)}_0\| = \|x^{(l)} [ x^{(l-1)}] - x^{(l)} [ x^{(l-1)}_0] \| \le \| x^{(l-1)} -x^{(l-1)}_0  \| = \| x^{(l-1)}_c \|$,
where the inequality is by (a).
\end{proof}

\begin{proof}[Proof of Proposition \ref{prop:deform2}]
We firstly establish that for all $l$,
\begin{equation}\label{eq:proof-deform2-1}
\| x^{(l)} [ T_\rho \circ D_\tau x^{(l-1)}] - T_\rho \circ D_\tau  x^{(l)} [ x^{(l-1)}] \| 
\le 2 c_1 |\nabla \tau|_\infty \|x_c^{(l-1)}\|,
\end{equation}
where $T_\rho$ is replaced by $D_\rho$ if applies to $x^{(0)}$ which does not have index $\alpha$. 
This is because that 
\[
 x^{(l)} [ T_\rho \circ D_\tau x^{(l-1)}]  = T_\rho  x^{(l)} [ D_\tau x^{(l-1)}] 
\]
by Theorem \ref{thm:equiv}, and that
\[
\| T_\rho  x^{(l)} [ D_\tau x^{(l-1)}] - T_\rho \circ D_\tau  x^{(l)} [ x^{(l-1)}] \|
=
\|  x^{(l)} [ D_\tau x^{(l-1)}] -  D_\tau  x^{(l)} [ x^{(l-1)}] \| 
\]
by the definition of $T_\rho$ (a rigid rotation in $u$, and a translation in $\alpha$).
This term can be upper bounded by $c_1 (B_l + C_l) |\nabla \tau|_\infty \|x_c^{(l-1)}\|$ (Lemma \ref{lemma:commuting}),
which leads to the desired bound under (A2) by Proposition \ref{prop:Bl-Cl-Dl}. 

The rest of the proof is similar to that of Proposition 3.3 of \cite{qiu2018dcfnet}: 
Write $x^{(l)}[D_\rho\circ D_\tau x^{(0)}]  - T_\rho \circ D_\tau x^{(l)}[ x^{(0)} ]$ as 
the sum of the differences $ x^{(l)} [ x^{(j)}[D_\rho\circ D_\tau x^{(j-1)}] ] - x^{(l)} [  T_\rho \circ D_\tau x^{(j)}[ x^{(j-1)} ] ]$
for $j=1, \cdots, l$.
The norm of the $j$-th term is bounded by 
$ \| x^{(j)}[D_\rho\circ D_\tau x^{(j-1)}] -   T_\rho \circ D_\tau x^{(j)}[ x^{(j-1)} ]  \| $ due to Proposition \ref{prop:l2stable} (a),
which, by applying \eqref{eq:proof-deform2-1} together with Proposition \ref{prop:l2stable} (b), can be bounded by 
$2 c_1 |\nabla \tau|_\infty \|x^{(0)}\|$. Summing over $j$ gives the claim. 
\end{proof}

\begin{proof}[Proof of Theorem \ref{thm:deform1}]
The proof is similar to that of Theorem 3.8 of \cite{qiu2018dcfnet}. 
With the bound in Proposition \ref{prop:deform2}, it suffices to show that 
\[
\| T_\rho \circ D_\tau  x^{(L)} [ x^{(0)}]  - T_\rho   x^{(L)} [ x^{(0)}]  \|
\le c_2 2^{-j_L} |\tau|_\infty \|x^{(0)}\|. 
\]
By the definition of $T_\rho$, the l.h.s. equals 
$\| D_\tau  x^{(L)} [ x^{(0)}]  -  x^{(L)} [ x^{(0)}]  \|$, 
which can be shown to be less than 
$c_2  |\tau|_\infty D_L \|x_c^{(l-1)}\|$ by extending the proof of Proposition 3.4 of \cite{qiu2018dcfnet}, 
similar to the argument in proving Lemma \ref{lemma:commuting}. 
The desired bound then follows by that $D_L \le 2^{-j_L} A_L \le 2^{-j_L} $ (Proposition \ref{prop:Bl-Cl-Dl} and (A2))
and that $\|x_c^{(l-1)}\| \le  \|x^{(0)}\|$ (Proposition \ref{prop:l2stable} (b)).
\end{proof}

\begin{lemma}\label{lemma:commuting}
In a rotation-equivariant CNN, 
 $B_l$, $C_l$ defined as in \eqref{eq:def-Bl-Cl-Dl}, 
 under (A1), (A3), 
for all $l > 0 $, with $c_1 =4$, $x_c^{(l)}$ as in Proposition \ref{prop:l2stable},
\[
\| x^{(l)} [ D_\tau x^{(l-1)}] - D_\tau  x^{(l)} [ x^{(l-1)}] \| \le c_1 (B_l + C_l) |\nabla \tau|_\infty \|x_c^{(l-1)}\|.
\]
\end{lemma}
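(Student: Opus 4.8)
The plan is to adapt the commuting estimate behind Lemma~3.2 of \cite{qiu2018dcfnet} to the joint $\R^2\times S^1$ convolution; the only genuinely new point is that $D_\tau$ warps the spatial variable $u$ alone and leaves the group index $\alpha$ untouched, so the in-filter rotation $\Theta_\alpha$ enters only through rotation invariance of $dv$. First I would reduce to the linear part. Writing the $l$-th layer as $x^{(l)}[z]=\sigma(L^{(l)}[z]+b^{(l)})$ with $L^{(l)}$ the linear joint-convolution operator, the warp $D_\tau$ commutes with the pointwise map $\sigma$ and with addition of the $(u,\alpha)$-constant bias, so by (A1) the left-hand side is at most $\|L^{(l)}[D_\tau x^{(l-1)}]-D_\tau L^{(l)}[x^{(l-1)}]\|$. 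Decomposing $x^{(l-1)}=x_c^{(l-1)}+x_0^{(l-1)}$: the component $x_0^{(l-1)}$ is constant in $(u,\alpha)$ hence $D_\tau$-fixed, and $L^{(l)}[x_0^{(l-1)}]$ is again constant in $(u,\alpha)$ by rotation invariance of $dv$ (the computation in the proof of Proposition~\ref{prop:l2stable}(b)), hence also $D_\tau$-fixed; the $x_0$ contributions cancel, so it suffices to bound $\|L^{(l)}[D_\tau y]-D_\tau L^{(l)}[y]\|$ for $y=x_c^{(l-1)}$.

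Next I would change variables on each spatial convolution. Fix $\lambda$, $\alpha$, and the indices $\lambda'$, $\alpha'$; the effective spatial filter is $\widetilde W(v):=W^{(l)}_{\lambda',\lambda}(\Theta_\alpha v,\alpha'-\alpha)$, and rotation invariance gives $\int|\widetilde W|=\int|W^{(l)}_{\lambda',\lambda}(\cdot,\alpha'-\alpha)|$ and $\int|v|\,|\nabla_v\widetilde W|=\int|v|\,|\nabla_v W^{(l)}_{\lambda',\lambda}(\cdot,\alpha'-\alpha)|$. In $L^{(l)}[D_\tau y]$ the argument of $y$ is $u+v-\tau(u+v)$ whereas in $D_\tau L^{(l)}[y]$ it is $u-\tau(u)+v$; I would substitute $w=v-(\tau(u+v)-\tau(u))$, which by (A3) is a $C^1$-diffeomorphism of $\R^2$ with inverse $v=g_u(w)$ satisfying $|g_u(w)-w|\le|g_u(w)|\,|\nabla\tau|_\infty\le\tfrac54|w|\,|\nabla\tau|_\infty$ and $g_u'=(I-(\nabla\tau)(u+g_u))^{-1}$. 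The difference becomes $\int[\widetilde W(g_u(w))\,|\det g_u'(w)|-\widetilde W(w)]\,y(u-\tau(u)+w,\alpha',\lambda')\,dw$, and I split the kernel into $[\widetilde W(g_u(w))-\widetilde W(w)]\,|\det g_u'(w)|$, controlled by $(\int|v|\,|\nabla_v\widetilde W(v)|\,dv)\,|\nabla\tau|_\infty$ up to an absolute constant (fundamental theorem of calculus along $[w,g_u(w)]$, plus uniform bounds on $\det g_u'$), and $\widetilde W(w)\,[|\det g_u'(w)|-1]$, controlled by $(\int|\widetilde W(w)|\,dw)\,|\nabla\tau|_\infty$ using $|\det(I-A)^{-1}-1|\le c\|A\|$ in two dimensions for $\|A\|\le\tfrac15$.

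Finally I would pass to the $L^2$ norm and sum. Since the kernel above depends on $u$ only through $g_u$ and the evaluation point $u-\tau(u)+w$, a Schur/Minkowski-type estimate bounds the $L^2_u$ norm by $(\sup_u\int_{\R^2}|\mathrm{kernel}(u,w)|\,dw)\cdot\sup_w\|y(\cdot-\tau(\cdot)+w,\alpha',\lambda')\|_{L^2}$, and the warped-and-translated $L^2$ norm of $y$ is at most $(1-|\nabla\tau|_\infty)^{-1}\|y(\cdot,\alpha',\lambda')\|_{L^2}$. Integrating over $\alpha'\in S^1$ (with normalization $\int_{S^1}d\alpha'=1$) and summing over $\lambda'$ produces $B^{(l)}_{\lambda',\lambda}$ and $C^{(l)}_{\lambda',\lambda}$ from \eqref{eq:def-B-C-D-1}; taking the $1/M_l$-normalized $\ell^2$ norm over $\lambda$ and using the symmetric two-term (row-sum/column-sum) definition of $B_l,C_l$ in \eqref{eq:def-Bl-Cl-Dl} yields the factor $B_l+C_l$ multiplying $\|x_c^{(l-1)}\|$, with the accumulated numerical constants ($\tfrac54$ from $|g_u(w)-w|$, the two-dimensional determinant constant, and $(1-|\nabla\tau|_\infty)^{-1}$) fitting under $c_1=4$ thanks to the margin $|\nabla\tau|_\infty<\tfrac15$.

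I expect the main obstacle to be the bookkeeping in the last step: confirming the kernel is bounded uniformly in $u$ so the Schur estimate legitimately applies despite $g_u$ varying with $u$, and verifying that the constants from the diffeomorphism bound, the determinant expansion, and the $L^2$ renormalization genuinely combine to at most $4$. The change of variables and the kernel splitting themselves are routine once this is handled, and the $\alpha$-geometry contributes nothing beyond the rotation-invariance identities for $\int|\widetilde W|$ and $\int|v|\,|\nabla_v\widetilde W|$.
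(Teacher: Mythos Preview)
Your proposal is correct and follows essentially the same approach as the paper, which simply refers to Lemma~3.2 of \cite{qiu2018dcfnet} and notes that the only adaptations are the rotation invariance $\int |W(\Theta_\alpha v,\beta)|\,dv=\int |W(v,\beta)|\,dv$ and, for $l>1$, the extra harmless integration $\int_{S^1}(\cdots)\,d\beta$ with $\int_{S^1}d\beta=1$. Your sketch in fact supplies considerably more detail than the paper's own proof, including the reduction to $x_c^{(l-1)}$ and the explicit change of variables and kernel splitting, all of which are exactly the ingredients underlying the cited DCFNet lemma.
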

\begin{proof}[Proof of Lemma \ref{lemma:commuting}]
The proof is similar to that of Lemma 3.2 of \cite{qiu2018dcfnet}. 
Specifically, when $l=1$, the argument is the same, 
making use of the fact that $\int |w(\Theta_\alpha v )|dv = \int |w( v )|dv, \quad \forall \alpha \in S^1$
and $\int_{S^1} d\alpha =1 $ due to the normalization of $\frac{1}{2\pi}$.
When $l > 1$, the same technique applies 
by considering the joint integration of $\int_{\R^2} \int_{S^1} (\cdots) dv d\beta$ instead of just $dv$.
The only difference is in using the new definitions of $B^{(l)}_{\lambda', \lambda}$ and $C^{(l)}_{\lambda', \lambda}$ for $l>1$
as in \eqref{eq:def-B-C-D-1},
both of which involve the integration of $\int_{S^1} (\cdots) d\beta$. The detail is omitted.
\end{proof}

\section{Experimental Details in Section 4}

\subsection{Object recognition with rotMNIST and CIFAR10}

In the experiments on rotMNIST dataset, the network architecture is shown in Table \ref{tab:network-arch-1}.
Stochastic gradient descent (SGD) with momentum is used to train 100 epochs with decreasing learning rate from $10^{-2}$ to $10^{-4}$.

In the experiments on CIFAR10 dataset, the VGG16-like network architecture is shown in Table \ref{tab:network-arch-2}.
SGD with momentum is used to train 100 epochs with decreasing learning rate from $10^{-2}$ to $10^{-4}$.

\subsection{Convolutional Auto-encoder for image reconstruction}

The network architecture is shown in Table \ref{tab:network-arch-3}. The network is trained on 50,000 training samples, the training set is  augmented by rotating each sample at 8 random angles,
producing 400k training set. The network is trained for 10 epochs, where the learning rate decreases from $10^{-3}$ to $10^{-6}$.

\subsection{Face recognition on Facescrub}\label{sec:A3-face}

To facilitate the evaluation on both known and unknown subjects, we 
select the first 500 of the 530 identities as our training subjects. 
The remaining 30 subjects are used for validating out of sample performance, namely the unknown subjects. 
The experiment on unknown subjects is critical for face models to generate over unseen people.
For both known and unknown subjects, we hold 10 images from each person as the probe images, and the remaining as the gallery images. 
The images are preprocessed by aligning facial landmarks using \cite{landmark} and crop the aligned face images to $112 \times 112$ with color. 
Thus, both our CNN and RotDCF models are trained with near-frontal and upright-only face images.

The network architecture is shown in Table \ref{tab:facenet}. 
According to the formula in Section \ref{sec:2},
the number of trainable parameters in the RotDCF Net is about $(\frac{1}{2})^2\cdot \frac{K}{L^2} \cdot K_\alpha = \frac{1}{4}$ of that of the CNN.


\begin{thebibliography}{10}

\bibitem{chen2015compressing}
Wenlin Chen, James Wilson, Stephen Tyree, Kilian Weinberger, and Yixin Chen.
\newblock Compressing neural networks with the hashing trick.
\newblock In {\em International Conference on Machine Learning}, pages
  2285--2294, 2015.

\bibitem{cheng2016rifd}
Gong Cheng, Peicheng Zhou, and Junwei Han.
\newblock Rifd-cnn: Rotation-invariant and fisher discriminative convolutional
  neural networks for object detection.
\newblock In {\em Computer Vision and Pattern Recognition (CVPR), 2016 IEEE
  Conference on}, pages 2884--2893. IEEE, 2016.

\bibitem{Cohen2016}
Taco~S. Cohen and Max Welling.
\newblock Group equivariant convolutional networks.
\newblock In {\em {ICML}}, pages 2990--2999, 2016.

\bibitem{cohen2018spherical}
Taco~S. Cohen, Mario Geiger, Jonas Koehler, and Max Welling.
\newblock Spherical CNNs.
\newblock {\em arXiv preprint arXiv:1801.10130}, 2018.

\bibitem{cohen2016steerable}
Taco~S. Cohen and Max Welling.
\newblock Steerable CNNs.
\newblock {\em arXiv preprint arXiv:1612.08498}, 2016.

\bibitem{DentonZBLF14}
Emily~L. Denton, Wojciech Zaremba, Joan Bruna, Yann LeCun, and Rob Fergus.
\newblock Exploiting linear structure within convolutional networks for
  efficient evaluation.
\newblock In {\em {NIPS}}, pages 1269--1277, 2014.

\bibitem{freeman1991design}
William~T. Freeman, Edward~H. Adelson, et~al.
\newblock The design and use of steerable filters.
\newblock {\em IEEE Transactions on Pattern analysis and machine intelligence},
  13(9):891--906, 1991.

\bibitem{Marcos2016}
Diego~M. Gonzalez, Michele Volpi, and Devis Tuia.
\newblock Learning rotation invariant convolutional filters for texture
  classification.
\newblock {\em CoRR}, abs/1604.06720, 2016.

\bibitem{hallman2015oriented}
Sam Hallman and Charless~C. Fowlkes.
\newblock Oriented edge forests for boundary detection.
\newblock In {\em Proceedings of the IEEE Conference on Computer Vision and
  Pattern Recognition}, pages 1732--1740, 2015.

\bibitem{han2015deep_compression}
Song Han, Huizi Mao, and William~J. Dally.
\newblock Deep compression: Compressing deep neural networks with pruning,
  trained quantization and huffman coding.
\newblock {\em International Conference on Learning Representations (ICLR)},
  2016.

\bibitem{han2015learning}
Song Han, Jeff Pool, John Tran, and William~J. Dally.
\newblock Learning both weights and connections for efficient neural network.
\newblock In {\em Advances in Neural Information Processing Systems}, pages
  1135--1143, 2015.

\bibitem{hinton2011transforming}
Geoffrey~E. Hinton, Alex Krizhevsky, and Sida~D. Wang.
\newblock Transforming auto-encoders.
\newblock In {\em International Conference on Artificial Neural Networks},
  pages 44--51. Springer, 2011.

\bibitem{howard2017mobilenets}
Andrew~G. Howard, Menglong Zhu, Bo Chen, Dmitry Kalenichenko, Weijun Wang,
  Tobias Weyand, Marco Andreetto, and Hartwig Adam.
\newblock Mobilenets: Efficient convolutional neural networks for mobile vision
  applications.
\newblock {\em arXiv preprint arXiv:1704.04861}, 2017.

\bibitem{SqueezeNet}
Forrest~N. Iandola, Song Han, Matthew~W. Moskewicz, Khalid Ashraf, William~J.
  Dally, and Kurt Keutzer.
\newblock Squeezenet: Alexnet-level accuracy with 50x fewer parameters and
  $<$0.5mb model size.
\newblock {\em arXiv:1602.07360}, 2016.

\bibitem{jaderberg2015spatial}
Max Jaderberg, Karen Simonyan, Andrew Zisserman, et~al.
\newblock Spatial transformer networks.
\newblock In {\em Advances in neural information processing systems}, pages
  2017--2025, 2015.

\bibitem{Jaderberg2015}
Max Jaderberg, Karen Simonyan, Andrew Zisserman, and Koray Kavukcuoglu.
\newblock Spatial transformer networks.
\newblock In {\em {NIPS}}, pages 2017--2025, 2015.

\bibitem{jaderberg2014speeding}
Max Jaderberg, Andrea Vedaldi, and Andrew Zisserman.
\newblock Speeding up convolutional neural networks with low rank expansions.
\newblock {\em arXiv preprint arXiv:1405.3866}, 2014.

\bibitem{landmark}
Vahid Kazemi and Josephine Sullivan.
\newblock One millisecond face alignment with an ensemble of regression trees.
\newblock In {\em Proceedings of the 2014 IEEE Conference on Computer Vision
  and Pattern Recognition, CVPR}, 2014.

\bibitem{Kivinen2011}
Jyri~J. Kivinen and Christopher K.~I. Williams.
\newblock Transformation equivariant boltzmann machines.
\newblock In {\em {ICANN}}, pages 1--9, 2011.

\bibitem{cifar}
Alex Krizhevsky.
\newblock Learning multiple layers of features from tiny images.
\newblock Technical report, 2009.

\bibitem{laptev2016ti}
Dmitry Laptev, Nikolay Savinov, Joachim~M. Buhmann, and Marc Pollefeys.
\newblock Ti-pooling: transformation-invariant pooling for feature learning in
  convolutional neural networks.
\newblock In {\em Proceedings of the IEEE Conference on Computer Vision and
  Pattern Recognition}, pages 289--297, 2016.

\bibitem{lebedev2014speeding}
Vadim Lebedev, Yaroslav Ganin, Maksim Rakhuba, Ivan Oseledets, and Victor
  Lempitsky.
\newblock Speeding-up convolutional neural networks using fine-tuned
  cp-decomposition.
\newblock {\em arXiv preprint arXiv:1412.6553}, 2014.

\bibitem{maninis2016convolutional}
Kevis-K. Maninis, Jordi Pont-Tuset, Pablo Arbel{\'a}ez, and Luc Van~Gool.
\newblock Convolutional oriented boundaries.
\newblock In {\em European Conference on Computer Vision}, pages 580--596.
  Springer, 2016.

\bibitem{ng2014data}
Hong-Wei Ng and Stefan Winkler.
\newblock A data-driven approach to cleaning large face datasets.
\newblock In {\em Image Processing (ICIP), 2014 IEEE International Conference
  on}, pages 343--347. IEEE, 2014.

\bibitem{oyallon2015deep}
Edouard Oyallon and St{\'e}phane Mallat.
\newblock Deep roto-translation scattering for object classification.
\newblock In {\em CVPR}, volume~3, page~6, 2015.

\bibitem{papyan2017convolutional}
Vardan Papyan, Yaniv Romano, and Michael Elad.
\newblock Convolutional neural networks analyzed via convolutional sparse
  coding.
\newblock {\em The Journal of Machine Learning Research}, 18(1):2887--2938,
  2017.

\bibitem{vgg-face}
O.~M. Parkhi, A.~Vedaldi, and A.~Zisserman.
\newblock Deep face recognition.
\newblock In {\em British Machine Vision Conference}, 2015.

\bibitem{qiu2018dcfnet}
Qiang Qiu, Xiuyuan Cheng, Robert Calderbank, and Guillermo Sapiro.
\newblock DCFNet: Deep neural network with decomposed convolutional filters.
\newblock {\em ICML} 2018,
\newblock {\em arXiv:1802.04145}.

\bibitem{rigamonti2013learning}
Roberto Rigamonti, Amos Sironi, Vincent Lepetit, and Pascal Fua.
\newblock Learning separable filters.
\newblock In {\em Computer Vision and Pattern Recognition (CVPR), 2013 IEEE
  Conference on}, pages 2754--2761. IEEE, 2013.

\bibitem{rubinstein2010double}
Ron Rubinstein, Michael Zibulevsky, and Michael Elad.
\newblock Double sparsity: Learning sparse dictionaries for sparse signal
  approximation.
\newblock {\em IEEE Transactions on signal processing}, 58(3):1553--1564, 2010.

\bibitem{schmidt2012learning}
Uwe Schmidt and Stefan Roth.
\newblock Learning rotation-aware features: From invariant priors to
  equivariant descriptors.
\newblock In {\em Computer Vision and Pattern Recognition (CVPR), 2012 IEEE
  Conference on}, pages 2050--2057. IEEE, 2012.

\bibitem{Schmidt2012}
Uwe Schmidt and Stefan Roth.
\newblock Learning rotation-aware features: From invariant priors to
  equivariant descriptors.
\newblock In {\em {CVPR}}, pages 2050--2057, 2012.

\bibitem{sifre2013rotation}
Laurent Sifre and St{\'e}phane Mallat.
\newblock Rotation, scaling and deformation invariant scattering for texture
  discrimination.
\newblock In {\em Computer Vision and Pattern Recognition (CVPR), 2013 IEEE
  Conference on}, pages 1233--1240. IEEE, 2013.

\bibitem{simonyan2014very}
Karen Simonyan and Andrew Zisserman.
\newblock Very deep convolutional networks for large-scale image recognition.
\newblock {\em arXiv preprint arXiv:1409.1556}, 2014.

\bibitem{tai2015convolutional}
Cheng Tai, Tong Xiao, Yi Zhang, Xiaogang Wang, et~al.
\newblock Convolutional neural networks with low-rank regularization.
\newblock {\em arXiv preprint arXiv:1511.06067}, 2015.

\bibitem{weiler2017learning}
Maurice Weiler, Fred~A. Hamprecht, and Martin Storath.
\newblock Learning steerable filters for rotation equivariant cnns.
\newblock {\em arXiv preprint arXiv:1711.07289}, 2017.

\bibitem{worrall2017harmonic}
Daniel~E. Worrall, Stephan~J. Garbin, Daniyar Turmukhambetov, and Gabriel~J.
  Brostow.
\newblock Harmonic networks: Deep translation and rotation equivariance.
\newblock In {\em Proc. IEEE Conf. on Computer Vision and Pattern Recognition
  (CVPR)}, volume~2, 2017.

\bibitem{Wu2015}
Fa Wu, Peijun Hu, and Dexing Kong.
\newblock Flip-rotate-pooling convolution and split dropout on convolution
  neural networks for image classification.
\newblock {\em CoRR}, abs/1507.08754, 2015.

\bibitem{CAM}
Bolei Zhou, Aditya Khosla, {\`{A}}gata Lapedriza, Aude Oliva, and Antonio
  Torralba.
\newblock Learning deep features for discriminative localization.
\newblock In {\em Proceedings of the 2014 IEEE Conference on Computer Vision
  and Pattern Recognition, CVPR}, 2016.

\bibitem{zhou2017oriented}
Yanzhao Zhou, Qixiang Ye, Qiang Qiu, and Jianbin Jiao.
\newblock Oriented response networks.
\newblock In {\em 2017 IEEE Conference on Computer Vision and Pattern
  Recognition (CVPR)}, pages 4961--4970. IEEE, 2017.

\end{thebibliography}
\end{document}